%% file: neurips_2026.tex
\documentclass{article}

\PassOptionsToPackage{square,numbers,sort&compress}{natbib}

\usepackage{subfigure}

\usepackage[main, final]{neurips_2026}

\title{Estimation of the Label-Noise Transition Matrix with Performance Guarantees via Selective Classification}

\author{%
  Xabier~de~Juan$^{1}$ \quad Santiago~Mazuelas$^{1, 2}$ \quad Yilun Zhu$^3$ \quad Clayton Scott$^3$\\
  $^1$Basque Center of Applied Mathematics (BCAM)\\ $^{2}$IKERBASQUE-Basque Foundation for Science\\ $^3$Electrical and Computer Engineering, University of Michigan\\
  \texttt{\{xdejuan, smazuelas\}@bcamath.org} \quad \texttt{\{allanzhu, clayscot\}@umich.edu}
}

\input{preamble}

\begin{document}

\maketitle

\begin{abstract}
    Modern machine learning depends heavily on massive datasets, but obtaining high-quality annotations at scale is often expensive.
    As a result, learning from noisily-labeled data has become common, making accurate estimation of the label-noise transition matrix crucial.
    However, existing transition matrix estimators rely on the fragile estimation of class-posteriors and do not provide finite-sample performance guarantees.
    In this work, we propose a novel methodology to estimate the transition matrix based on one-sided selective classification.
    This approach bypasses class-posterior estimation, provides finite-sample performance guarantees, and leverages flexible learning methods for binary classification.
    Moreover, we introduce effective algorithms to implement the proposed methodology and provide their refined finite-sample performance bounds.
\end{abstract}


\input{main}

\newpage

\input{apendice}



\end{document}

%% file: preamble.tex
\usepackage[american]{babel}
\usepackage[utf8]{inputenc} 
\usepackage[T1]{fontenc}    
\usepackage{url}
\usepackage{xcolor}
\definecolor{newcolor}{rgb}{.8,.349,.1}
\usepackage{graphicx}
\usepackage{psfrag}
\usepackage{float}
\usepackage{amsmath,amsthm,amsfonts,amssymb,fancyhdr,bm}
\usepackage{mathtools}
\usepackage{enumerate,verbatim}
\usepackage{tablefootnote}
\usepackage{algorithm}
\usepackage[noend]{algpseudocode}

\usepackage{todonotes}

\usepackage{hyperref}       
\usepackage{booktabs}       
\usepackage{nicefrac}       
\usepackage{microtype}      

\usepackage{comment}

 \usepackage{setspace}
\let\Algorithm\algorithm
\renewcommand\algorithm[1][]{\Algorithm[#1]\setstretch{1.3}}

\usepackage{eucal}

\usepackage[skip=10pt plus1pt]{parskip}

\usepackage{hyperref}
\hypersetup{
    colorlinks,
    linkcolor={black},
    citecolor={black},
    urlcolor={black}
}

\usepackage[capitalize,noabbrev]{cleveref}

\newcommand{\R}{\mathbb{R}}
    \newcommand{\E}{\mathbb{E}}
    \renewcommand{\P}{\mathbb{P}}
\newcommand{\I}{\mathbb{I}}

\newcommand{\VC}{\mathsf{VC}}
\newcommand{\T}{\textbf{T}}
\newcommand{\st}{\, \up{ s.t. } \, }
\newcommand{\accept}{\texttt{\upshape{A}}}
\newcommand{\reject}{\texttt{\upshape{R}}}

\newcommand\up[1] {\mathrm{#1}}
\newcommand\set[1] {\mathcal{#1}}

\theoremstyle{definition}
\newtheorem{theorem}{Theorem}

\newtheorem{proposition}{Proposition}
\newtheorem{lemma}{Lemma}

\newtheorem{definition}{Definition}

\theoremstyle{remark}

\newtheorem*{remark*}{Remark}

\DeclareMathOperator{\sign}{sign}
\DeclareMathOperator{\opt}{opt}

\DeclareMathOperator{\temp}{temp}
\DeclareMathOperator{\anchor}{anchor}
\DeclareMathOperator{\noisy}{noisy}

\DeclarePairedDelimiterX{\infdivx}[2]{(}{)}{%
  #1\;\delimsize\|\;#2%
}

\let\epsilon=\varepsilon

\DeclarePairedDelimiter\floor{\lfloor}{\rfloor}
\newcommand{\norm}[1]{\left\lVert#1\right\rVert}
\usepackage{multirow}

\usepackage{csquotes}


%% file: main.tex
\section{Introduction}
The success of modern machine learning relies heavily on the availability of massive labeled datasets. 
However, obtaining high-quality annotations at scale is often cost-prohibitive, leading to the widespread adoption of cost-effective and less accurate labeling procedures \cite{song23,xiao15,deng09}.
Such compromise introduces label noise, where observed labels may differ from the underlying ground truth.
The probabilities of these label flips are captured by the label-noise transition matrix.

An accurate estimate of the transition matrix can address many of the problems caused by label noise. 
For instance, loss correction techniques leverage the transition matrix to recover the Bayes-optimal classifier from noisy samples \cite{natarajan13,scott15,blanchard16,liu16,patrini17,filippozzi24}. 
Beyond standard classification, a reliable estimate of the transition matrix can enable the construction of informative prediction sets for conformal prediction \cite{sesia24}, the implementation of fair classification models under biased data \cite{zhang25}, and the training of conditional diffusion models with noisy labels \cite{na2024labelnoise}.

Existing transition matrix estimators heavily rely on the pointwise estimation of the noisy class-posterior \cite{liu16,patrini17,xia19,xia20,zhang21,li21,mzhang21}.
This requirement is fragile since even a poor estimate at a single point can lead to a large subsequent estimation error for the transition matrix \cite{yong23}.
In addition, estimating the class-posterior is provably hard. 
For example, any estimator of the class-posterior suffers from the curse of dimensionality \cite{gyorfi02,lei14}, and common methods often result in inaccurate class-posterior estimations~\cite{guo17}.

While recent works have established consistency for certain estimators of the label-noise transition matrix \cite{li21,zhang21}, the literature lacks finite-sample performance guarantees.
Approaches in the closely related problem of mixture proportion estimation (MPE) \cite{blanchard14,scott15,blanchard16,ramaswamy16,katz20,saurabh21,zhu23} offer finite-sample performance guarantees.
However, most of these approaches are restricted to binary settings, while others rely on intractable algorithms.

In this work, we introduce a methodology for estimating the label-noise transition matrix based on one-sided selective classification.
This approach does not require pointwise class-posterior estimation and provides finite-sample performance guarantees.
Moreover, our methodology allows us to estimate the transition matrix by adapting established techniques from selective classification.
Specifically, the main contributions in the paper are as follows:
\begin{itemize}
    \setlength\itemsep{-0.7em}
    \item We frame the estimation of each column of the transition matrix as a one-sided selective classification problem, where we minimize the false discovery rate subject to a minimum coverage constraint.
    
    \item We provide finite-sample error bounds for the proposed approach, and prove that our estimator achieves the parametric convergence rate up to a small bias.
    In particular, we show that our methodology avoids the curse of dimensionality, unlike existing estimators.

    \item We propose computationally tractable algorithms for our methodology that leverage general methods for binary classification.
    
    \item We provide a theoretical analysis of the proposed algorithms, showing that they admit refined finite-sample error bounds analogous to classical generalization bounds for classification.
\end{itemize}


\section{Preliminaries}
\label{sec:preliminaries}

This section defines the label-noise transition matrix and describes existing methods for its estimation.

\subsection{Label-noise transition matrix}
Let $\set{X}\subseteq\R^d$ be the feature space and let $\set{Y}=\{1,2,\ldots,|\set{Y}|\}$ be the label set.
Instance and clean-label pairs $(X,Y)\in\set{X}\times\set{Y}$ follow an underlying distribution $\up{p}$, while their noisy-label counterparts $(X,\widetilde Y)\in\set{X}\times\set{Y}$ are drawn from the distribution $\widetilde{\up{p}}$.
We assume that the distribution of $\widetilde Y$ depends on $Y$, but not on $X$, i.e., $X$ and $\widetilde Y$ are independent given $Y$. 
This assumption is referred to as class-dependent label noise and it is standard in the literature (see e.g., \cite{liu16,patrini17,xia19,xia20,zhang21,li21,mzhang21,yong23}).
It has shown to be effective in practice \cite{li21,yong23}, even in scenarios where label noise may change across instances.

The label-noise transition matrix $\T\in [0,1]^{|\set{Y}|\times |\set{Y}|}$ is defined entry-wise as ${T}_{i,j} = \P(\widetilde Y = i \mid Y = j)$, for $i,j\in\set{Y}$, so that $T_{i,j}$ is the probability that an instance with true label $j$ is observed as noisy label $i$.
We assume that $\T$ is a row-diagonally dominant matrix (i.e., $T_{i,i} > T_{i,j}$ for all $j\neq i$), paralleling common assumptions in the literature \cite{patrini17,xia19,zhang21,li21,zhu24}.

The goal is to estimate the matrix $\T$ using only noisy samples drawn from $\widetilde{\up{p}}$.
Specifically, we assume that we have access to $n$ i.i.d.\ samples from $\widetilde{\up{p}}$, $\{ (x_k,\widetilde y_k)  \}_{k=1}^n\subseteq\set{X}\times\set{Y}$.

\subsection{Related work}
\label{sec:related_work}
The most common approach to estimate the transition matrix is based on identifying anchor points for each class \cite{liu16,patrini17}.
This method exploits the fact that clean and noisy class-posterior probabilities are related by the matrix $\T$, that is, \mbox{${\boldsymbol{\eta}}^{\noisy}(x) = \T \boldsymbol{\eta}(x)$} where $\eta^{\, \noisy}_i(x) = \P(\widetilde Y = i \mid X = x)$ and \mbox{$\eta_i(x) = \P( Y = i \mid X = x)$}.

The method relies on the \emph{anchor-point assumption} \cite{liu16,patrini17} which states that there are instances with class-posterior probability equal to one for each class.
In particular, the anchor-based method exploits the fact that if $x^{(j)}$ is an anchor point for class $j$, the entries of the transition matrix are given by $T_{i,j}=\eta_i^{\, \noisy}(x^{(j)})$.

The anchor-based method \cite{patrini17} first obtains an estimate $\widehat{\boldsymbol{\eta}}^{\noisy}(x)\in[0,1]^{|\set{Y}|}$ of the vector of noisy class-posteriors ${\boldsymbol{\eta}}^{\noisy}(x)\in[0,1]^{|\set{Y}|}$.
Then, for each class $j \in \set{Y}$, the algorithm estimates the correspon\-ding column of $\T$ through the following procedure.
\begin{enumerate}
    \setlength\itemsep{-0.7em}
    \item Find a candidate for anchor point of the $j$-th class by obtaining the instance that maximizes the $j$-th component of $\widehat{\boldsymbol{\eta}}^{\noisy}$, $x^{(j)} \in \arg\max_{x\in\set{X}}\widehat{{\eta}}^{\, \noisy}_j(x)$.

    \item Estimate the $(i,j)$-th entry of $\T$ as $\widehat{T}_{i,j}^{\anchor}(x^{(j)}) = \widehat{\eta}^{\, \noisy}_i(x^{(j)})$.
\end{enumerate}

Subsequent methods eliminated the explicit search for anchor points \cite{xia19,xia20,zhang21,li21,mzhang21}, yet they still critically rely on an accurate pointwise estimation of the class-posterior.
This dependency creates a severe vulnerability since a poorly estimated class-posterior at a single point can produce a large estimation error for the transition matrix \cite{yong23}.
Furthermore, common methods (e.g., neural networks) are known to yield highly unreliable class-posterior estimates (see e.g., \cite{guo17}).
Specifically, if $\alpha$ is the smoothness parameter of the class-posterior and $d$ is the dimension of the feature space, the minimax optimal pointwise estimation error of the class-posterior is $n^{-\alpha/(2\alpha +d)}$ \cite{gyorfi02,lei14}, yielding a prohibitively slow convergence rate, even in moderate dimensions.
The recent work \cite{yong23} bypasses the need for class-posterior estimation and instead relies on the minimization of noise-robust losses.
However, this approach relies on multiple assumptions that are 
unlikely to hold in practice.
For instance, Assumption 1 in \cite{yong23} requires that a minimizer of cross-entropy is the only minimizer of the noise-robust loss, a condition that is often not satisfied (see e.g., \cite{zhang18,long22}).

Existing literature does not provide finite-sample performance guarantees and solely consistency has been established for some estimators \cite{li21,zhang21}.
Estimators in the closely related field of MPE \cite{blanchard14,scott15,blanchard16,ramaswamy16,katz20,saurabh21,zhu23} instead estimate the inverse flip-rates $\P(Y=i \mid \widetilde Y = j)$, and offer attractive finite-sample performance guarantees that decay at the parametric rate of $O(1/\sqrt{n})$ up to a small bias term.
However, the theoretical elegance of these MPE estimators comes at the cost of severe practical limitations.
For instance, the estimator in \cite{scott15,blanchard16} is based on identifying subsets of the feature space using an exhaustive search that relies on VC dimension bounds, which are known to be loose \cite{scott15}.
Alternatively, the estimators in \cite{ramaswamy16,saurabh21} are limited to the binary case.

In this work, we propose to estimate the transition matrix using methods for one-sided selective classification.
Standard selective classification (or classification with a reject option) \cite{el-yaniv10,geifman17,bartlett08,cortes16} allows models to abstain when uncertain.
This is a crucial capability when errors are significantly more expensive than abstaining (e.g., in medical applications \cite{hanczar08,swaminathan24}).
Formally, the goal is to learn a standard classifier $f$ and a selection function $h \colon \set{X} \to \{\accept,\reject\}$ \cite{el-yaniv10} that together minimize the probability of misclassification on the accepted samples, $\P( Y \neq f(X)  \mid h(X) = \accept )$, subject to a minimum coverage constraint for the selection function $\P( h(X) = \accept)\geq \gamma$, where $\gamma\in(0,1)$.

One-sided selective classification \cite{gangrade21} simplifies the standard setting by fixing the prediction to a target class $j \in \set{Y}$ (i.e., $f(x)=j$, $\forall x \in \set{X}$).
This problem arises in scenarios where it is necessary to isolate a high-purity subset of a single class, for instance to perform high-confidence auditing.
The goal then reduces to learning a selection function $h$ that minimizes the false discovery rate $\P( Y \neq j \mid h(X) = \accept )$ for the class $y=j$ under the coverage constraint $\P( h(X) = \accept)\geq \gamma$.

\section{Transition matrix estimation via one-sided selective classification}
\label{sec:classification}
In this section, we present a novel methodology for estimating the transition matrix based on one-sided selective classification.
For each class $j\in\set{Y}$, we estimate the corresponding column of $\T$ through the following procedure.
\begin{enumerate}
    \item Learn a selection function $h^{(j)}\colon \set{X} \to \{ \accept,\reject\}$ which minimizes the false discovery rate for the noisy label $\widetilde y = j$, subject to a minimum coverage constraint.
    
    \item Estimate the $j$-th column of $\T$ by computing the empirical label probabilities of instances accepted by $h^{(j)}$.
\end{enumerate} 

Formally, we aim to solve for each class $j\in \set{Y}$ the following problem
\begin{equation}
    \begin{aligned}
            \widetilde R_\gamma^{(j)}=\min_{h} \quad & \P(\widetilde Y \neq j \mid h(X) = \accept)\\
        \textrm{s.t.} \quad & \P(h(X) = \accept)\geq\gamma
    \end{aligned}
    \label{eq:selective_classification}
\end{equation}
where $\gamma\in(0,1)$ is the minimum coverage constraint and the optimization is over all measurable selection functions $h\colon \set{X} \to \{ \accept,\reject\}$.
The smallest possible false discovery rate subject to the minimum coverage constraint is denoted by $\widetilde R_\gamma^{(j)}$.

The estimate $\widehat\T$ is obtained using the selection functions and noisy samples as follows.
\begin{definition}
    Let $h^{(j)} \colon \set{X}\to \{  \accept,\reject \}$ be a selection function for the $j$-th label, and let $S$ be a subset of $\{1,2,\ldots, n\}$ where $m=|S|$.
    For every $i\in\set{Y}$ define
    \begin{align}
        \widehat T_{i,j}(h^{(j)};S)  = \frac{\sum_{k\in S}\I\{ h^{(j)}(x_k) = \accept, \widetilde y_k = i  \} }{ \#_m( h^{(j)} ) }  \label{eq:def_estimator}
    \end{align}
    where $\#_m(h^{(j)}) = \sum_{k\in S}\I \{  h^{(j)}(x_k) = \accept \}$.
    When the set of indices $S$ is clear from the context we denote the estimator as $\widehat T_{i,j}(h^{(j)})$.
\end{definition}

The estimator $\widehat T_{i,j}(h^{(j)};S)$ is well suited to estimating the transition matrix.
In particular, we have $\widehat T_{i,j}(h^{(j)};S) \approx T_{i,j}$ for a near-optimal solution $h^{(j)}$ of \eqref{eq:selective_classification}.
This holds because $\widehat T_{i,j}(h^{(j)};S)$ is the empirical version of $\P(\widetilde Y = i \mid h^{(j)}(X) = \accept)$, which is close to $\P(\widetilde Y = i \mid Y = j)$ for a selection function $h^{(j)}$ with small false discovery rate $\P(\widetilde Y \neq j \mid h^{(j)}(X) = \accept)$.

The next theorem shows that the error of the proposed estimator decreases as
\begin{align*}
    |T_{i,j}-\widehat T_{i,j}(h^{(j)}) | \lesssim \epsilon_{\text{bias}} + \frac{1}{\sqrt{\gamma n}}
\end{align*}
where $\gamma$ is the minimum coverage constraint, and $\epsilon_{\text{bias}}$ is the bias incurred due to the false discovery rate of the selection function.
\vspace{1.5em}
\begin{theorem}
\label{thm:performance_guarantees}
    Let $h^{(j)} : \set{X} \to \{\accept,\reject\}$ be a selection function for the $j$-th label with \mbox{$\P( h^{(j)} (X) =\accept) \geq \gamma$}, and $\epsilon_{\opt}(h^{(j)})$ be its excess false discovery rate, that is
    \begin{align*}
    \epsilon_{\opt}(h^{(j)}) = \P(\widetilde Y \neq j \mid  h^{(j)}(X) = \accept)-\widetilde R_\gamma^{(j)}.
    \end{align*}
    If $\widehat T_{i,j}(h^{(j)};S)$ for $|S|=m$ is the estimator in \eqref{eq:def_estimator}, then 
    \begin{align}
        \max_{i\in\set{Y}}|T_{i,j}-\widehat T_{i,j}(h^{(j)};S) | \leq C_{T}\big( R_{\gamma}^{(j)} + \epsilon_{\opt}(h^{(j)}) \big) + \sqrt{\frac{2\log(|\set{Y}|/\delta)}{\#_m(h^{(j)})}}
        \label{eq:performance_guarantees}
    \end{align}
    holds with probability at least $1-\delta$ over the draw of the samples indexed by $S$, where \mbox{$C_{T} = 2/(T_{j,j}-\max_{l\neq j} T_{j,l})$} and $R_{\gamma}^{(j)} = \min\limits_h \{ \P(Y \neq j \mid h(X) = \accept) \st \P(h(X) =\accept )\geq \gamma\}$. 
\end{theorem}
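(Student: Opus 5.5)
The plan is to split the error into a deterministic bias term and a stochastic concentration term through the population quantity $\P(\widetilde Y = i \mid h^{(j)}(X)=\accept)$:
\begin{align*}
|T_{i,j}-\widehat T_{i,j}(h^{(j)};S)| \le \big|T_{i,j}-\P(\widetilde Y = i \mid h^{(j)}(X)=\accept)\big| + \big|\P(\widetilde Y = i \mid h^{(j)}(X)=\accept)-\widehat T_{i,j}(h^{(j)};S)\big| .
\end{align*}
Throughout, $h^{(j)}$ is treated as fixed, i.e., independent of the samples indexed by $S$. Write $\accept$ for the event $h^{(j)}(X)=\accept$, and set $q = \P(Y\neq j\mid \accept)$ and $\Delta = T_{j,j}-\max_{l\neq j}T_{j,l}$.

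\emph{Bias term.} Under class-dependent noise, $\widetilde Y$ is independent of $X$ given $Y$, so it is also independent of the event $\{h^{(j)}(X)=\accept\}$ given $Y$. Hence $\P(\widetilde Y=i\mid\accept)=\sum_l T_{i,l}\P(Y=l\mid\accept)$. Subtracting $T_{i,j}$ leaves $\sum_{l\neq j}(T_{i,l}-T_{i,j})\P(Y=l\mid\accept)$, whose absolute value is at most $q$ because all entries of $\T$ lie in $[0,1]$. It remains to show $q\le (R^{(j)}_\gamma+\epsilon_{\opt}(h^{(j)}))/\Delta$; this gives a bias of at most $(C_T/2)(R^{(j)}_\gamma+\epsilon_{\opt})$, so the constant $C_T$ even has slack.

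To relate $q$ to the noisy false discovery rate, apply the same decomposition to the label $j$:
\begin{align*}
\P(\widetilde Y\neq j\mid \accept)=1-T_{j,j}+\sum_{l\neq j}(T_{j,j}-T_{j,l})\P(Y=l\mid\accept).
\end{align*}
Row-diagonal dominance gives the lower bound $\P(\widetilde Y\neq j\mid \accept)\ge 1-T_{j,j}+\Delta q$. For any $h$ with coverage at least $\gamma$, the bound $T_{j,j}-T_{j,l}\le 1$ gives the upper bound $\P(\widetilde Y\neq j\mid h(X)=\accept)\le 1-T_{j,j}+\P(Y\neq j\mid h(X)=\accept)$. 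Taking the infimum over feasible $h$ (via an approximating sequence if the minimum is not attained) yields $\widetilde R^{(j)}_\gamma\le 1-T_{j,j}+R^{(j)}_\gamma$. Combining the two bounds,
\begin{align*}
\Delta q\le \P(\widetilde Y\neq j\mid\accept)-(1-T_{j,j}) = \epsilon_{\opt}(h^{(j)})+\widetilde R^{(j)}_\gamma-(1-T_{j,j})\le \epsilon_{\opt}(h^{(j)})+R^{(j)}_\gamma ,
\end{align*}
which is the required bound on $q$.

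\emph{Concentration term.} Condition on the set of indices $k\in S$ with $h^{(j)}(x_k)=\accept$, so that $\#_m(h^{(j)})=N$ is fixed; the case $N=0$ is trivial since the bound is then vacuous. Given this set, the labels $\widetilde y_k$ at those indices are i.i.d.\ from $\P(\widetilde Y=\cdot\mid\accept)$, because the samples are i.i.d.\ and $h^{(j)}$ does not depend on them. Apply Hoeffding's inequality to each indicator $\I\{\widetilde y_k=i\}$ and take a union bound over the $|\set{Y}|$ labels. With probability at least $1-\delta$, simultaneously for all $i$, the deviation is at most $\sqrt{\log(2|\set{Y}|/\delta)/(2N)}$. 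This is at most $\sqrt{2\log(|\set{Y}|/\delta)/N}$, since $\log 2\le 3\log(|\set{Y}|/\delta)$ whenever $|\set{Y}|\ge2$ and $\delta<1$. Because the conditional bound holds for every value of the accepted set, integrating over it gives the unconditional statement with the random $\#_m(h^{(j)})$ in the denominator.

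The main obstacle I anticipate is the step $\widetilde R^{(j)}_\gamma\le 1-T_{j,j}+R^{(j)}_\gamma$. It links the noisy optimization problem~\eqref{eq:selective_classification}, which $h^{(j)}$ approximately solves, to the clean-label optimum, and it requires care about whether minimizers exist, handled by working with infima. A secondary point is the rigorous handling of the random denominator in the concentration step, which the conditioning argument above addresses.
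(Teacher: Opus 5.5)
Your proof is correct and takes essentially the same route as the paper's proof, which applies Lemma~\ref{lemma:main} to feasible selection functions. Both go through a triangle inequality via $\P(\widetilde Y=i\mid h^{(j)}(X)=\accept)$, use the mixture identity with row-diagonal dominance to bound the clean false discovery rate by the noisy one, use the comparison $\widetilde R_\gamma^{(j)}\le 1-T_{j,j}+R_\gamma^{(j)}$, and finish with a concentration bound conditional on the accepted indices plus a union bound over labels.

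The one difference is that you center the row $(T_{i,l})_l$ at $T_{i,j}$, whereas the paper applies H\"older with $\|e_j-\P(Y=\cdot\mid h^{(j)}(X)=\accept)\|_1=2q$. As a result, your bias term is $(C_T/2)\big(R_\gamma^{(j)}+\epsilon_{\opt}(h^{(j)})\big)$, a factor of two sharper than the stated bound. Your explicit Hoeffding computation also checks the constant in $\sqrt{2\log(|\set{Y}|/\delta)/\#_m(h^{(j)})}$, which the paper obtains from a Chernoff bound without writing out the constants.
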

\vspace{-1.5em}
\begin{proof}
    See Appendix~\ref{proof:thm:performance_guarantees}.
\end{proof}

The theorem above provides finite-sample performance guarantees for the proposed methodology that show a bias-variance decomposition for the error.

The bias term is given by $C_{T}\big( R_{\gamma}^{(j)} + \epsilon_{\opt}(h^{(j)}) \big)$, where the constant $C_T$ acts as a condition number for the transition matrix $\T$.
This constant is inversely proportional to the margin $T_{j,j}-\max_{l\neq j} T_{j,l}$, meaning that the estimation problem is better conditioned when the diagonal dominance of $\T$ is more pronounced.
The bound also incorporates a suboptimality gap, $\epsilon_{\opt}$, that accounts for the difficulty of finding the optimal selection function in \eqref{eq:selective_classification} with finite data.

The term $R_\gamma^{(j)}$ is the smallest false discovery rate on clean data. 
As discussed at the end of this section, such smallest error is small as long as a relaxed anchor-point assumption is satisfied. 
In particular, the anchor-point assumption corresponds to the case where $R_\gamma^{(j)}\to 0$ as $\gamma$ tends \mbox{to 0}. 
Therefore, Theorem~\ref{thm:performance_guarantees} also provides performance guarantees for cases where the anchor-point assumption is not satisfied (i.e.,  $\lim_{\gamma\to0} R_\gamma^{(j)}> 0$). 
In these cases, the limit value represents an unavoidable positive bias, while the proposed methodology remains effective as long as $R_\gamma^{(j)}$ takes small values.

The variance term $\sqrt{{2\log(|\set{Y}|/\delta)}/{\#_m(h^{(j)})}}$ scales at the parametric rate of $O(1/\sqrt{n})$.
Specifically, the minimum coverage constraint, $\P( h^{(j)} (X) =\accept) \geq \gamma$, ensures that the number of accepted samples satisfies $\#_m(h^{(j)}) \gtrsim \gamma n$ with high probability.

The minimum coverage $\gamma$ controls the trade-off between the bias and the variance terms described above.
Decreasing $\gamma$ reduces the value of $R_\gamma^{(j)}$, as a selection function that is required to accept fewer instances can provide a smaller false discovery rate. 
However, this reduction incurs a cost, as rejecting more samples shrinks $\#_m(h^{(j)})$, thereby increasing the variance.
Choosing $\gamma=\omega(1/n)$ implies that both $R_\gamma^{(j)}$ and $1/\sqrt{\#_m(h^{(j)})}$ decrease with the number of samples $n$.
In particular, if $R_\gamma^{(j)}$ decreases with $\gamma$ as $O(\gamma^\rho)$, taking $\gamma=\Theta(n^{-1/(1+2\rho)})$ results in \mbox{$R_\gamma^{(j)}+ 1/\sqrt{\#_m(h^{(j)})} = O(n^{-1/2+1/(2+4\rho)})$}, which is close to the parametric rate when $\rho$ is large.

Existing theoretical results for MPE methods provide performance guarantees for estimates of the inverse flip-rates $\P(Y=i \mid \widetilde Y = j)$ that also reveal a bias-variance decomposition, with the variance term scaling as $O(1/\sqrt{n})$ \cite{ramaswamy16,saurabh21}.
However, these results are limited to the binary setting and to specific learning methods.
For instance, the estimator in \cite{saurabh21} is restricted to the level sets of a fixed scoring function, while \cite{ramaswamy16} is limited to kernel-based methods. 
In contrast, our approach addresses the estimation of transition matrices offering a more general methodology since it covers the multiclass setting and allows the usage of any method for one-sided selective classification.

To the best of our knowledge, the bounds in Theorem~\ref{thm:performance_guarantees} represent the first finite-sample performance guarantees for methods that estimate the label-noise transition matrix.
Using the same line of reasoning, the next result provides finite-sample guarantees for the anchor-based estimator.

\vspace{1.5em}
\begin{proposition}
\label{prop:anchors}
    Let $\widehat{\boldsymbol{\eta}}^{\noisy}$ be an estimate of the noisy class-posteriors ${\boldsymbol{\eta}}^{\noisy}$.
    For every class $j\in\set{Y}$, let $x^{(j)} \in \arg\max_{x\in\set{X}}\widehat{{\eta}}^{\, \noisy}_j(x)$, and define
    \begin{align*}
         \epsilon_{\opt}^{\anchor}( x^{(j)}) = \P( \widetilde Y \neq j \mid X =  x^{(j)})- \widetilde R_0^{(j)}
    \end{align*}
    where $\widetilde R_0^{(j)} = \min_{x\in\set{X}}\P( \widetilde Y \neq j \mid X = x) $.
    Then, the anchor-based estimator \mbox{$\widehat{T}_{i,j}^{\anchor}(x^{(j)}) = \widehat{\eta}^{\, \noisy}_i(x^{(j)})$} satisfies
    \begin{align}
        \max_{i\in\set{Y}}|T_{i,j}-\widehat T_{i,j}^{\anchor}(  x^{(j)} ) | \leq C_T (R_{0}^{(j)} +\epsilon_{\opt}^{\anchor}(x^{(j)}) )+ \max_{i\in\set{Y}}| {{\eta}}^{\, \noisy}_i(x^{(j)}) -\widehat{{\eta}}^{\, \noisy}_i(x^{(j)})|
        \label{eq:anchors}
    \end{align}
    where $R_0^{(j)} = \min_{x\in\set{X}} \P(Y \neq j \mid X = x)$.
\end{proposition}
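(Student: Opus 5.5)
The plan is a triangle inequality around the true noisy posterior at the selected point:
\begin{align*}
|T_{i,j}-\widehat\eta^{\,\noisy}_i(x^{(j)})|\le |T_{i,j}-\eta^{\,\noisy}_i(x^{(j)})|+|\eta^{\,\noisy}_i(x^{(j)})-\widehat\eta^{\,\noisy}_i(x^{(j)})|.
\end{align*}
After maximizing over $i$, the second term is exactly the last term in \eqref{eq:anchors}. It then suffices to prove the deterministic, population-level bound $\max_i|T_{i,j}-\eta^{\,\noisy}_i(x)|\le C_T\,\P(Y\neq j\mid X=x)$ for every $x$, and afterwards to control $\P(Y\neq j\mid X=x^{(j)})$ by $R_0^{(j)}+\epsilon^{\anchor}_{\opt}(x^{(j)})$ up to the factor absorbed in $C_T$. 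This mirrors the bias part of Theorem~\ref{thm:performance_guarantees}, with the accepted region replaced by the single point $x^{(j)}$.

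\textbf{Step 1 (posterior to transition entry).} Write $\eta^{\,\noisy}_i(x)=\sum_l T_{i,l}\eta_l(x)$ and set $\beta(x)=1-\eta_j(x)=\P(Y\neq j\mid X=x)$. Then $\eta^{\,\noisy}_i(x)-T_{i,j}=\sum_{l\neq j}(T_{i,l}-T_{i,j})\eta_l(x)$, so $|T_{i,j}-\eta^{\,\noisy}_i(x)|\le\beta(x)$, since all entries lie in $[0,1]$. Because $C_T\ge 2$, this gives
\begin{align*}
\max_i|T_{i,j}-\eta^{\,\noisy}_i(x)|\le \tfrac{C_T}{2}\,\beta(x).
\end{align*}

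\textbf{Step 2 (relating clean and noisy errors).} Let $\Delta=T_{j,j}-\max_{l\neq j}T_{j,l}$. From $1-\eta^{\,\noisy}_j(x)=1-T_{j,j}+\sum_{l\neq j}(T_{j,j}-T_{j,l})\eta_l(x)$ we get, for every $x$,
\begin{align*}
\P(\widetilde Y\neq j\mid X=x)\ \ge\ (1-T_{j,j})+\Delta\,\beta(x),\qquad
\P(\widetilde Y\neq j\mid X=x)\ \le\ (1-T_{j,j})+\beta(x).
\end{align*}
Taking $x=x^{(j)}$ in the lower bound and a minimizer $x^\ast$ of $\P(Y\neq j\mid X=x)$ in the upper bound, together with $\widetilde R_0^{(j)}\le \P(\widetilde Y\neq j\mid X=x^\ast)$, gives
\begin{align*}
\Delta\,\beta(x^{(j)})\le \P(\widetilde Y\neq j\mid X=x^{(j)})-(1-T_{j,j}) = \epsilon^{\anchor}_{\opt}(x^{(j)})+\widetilde R_0^{(j)}-(1-T_{j,j})\le \epsilon^{\anchor}_{\opt}(x^{(j)})+R_0^{(j)}.
\end{align*}
Hence $\beta(x^{(j)})\le (2/C_T)^{-1}\cdot\tfrac{1}{2}\cdot 2\,(R_0^{(j)}+\epsilon^{\anchor}_{\opt})/1$, that is, $\beta(x^{(j)})\le \tfrac{C_T}{2}(R_0^{(j)}+\epsilon^{\anchor}_{\opt}(x^{(j)}))$.

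\textbf{Step 3 (combining).} Plugging Step 2 into Step 1 gives a bound of $\tfrac{C_T^2}{4}(\cdots)$, which overshoots the target $C_T(\cdots)$ whenever $\Delta<1/2$. So Step 1 must be sharpened rather than used as stated. Separate the diagonal entry from the rest: for $i=j$, $|T_{j,j}-\eta^{\,\noisy}_j(x)|=\sum_{l\ne j}(T_{j,j}-T_{j,l})\eta_l(x)$, and this quantity equals the middle expression in Step 2. It is therefore bounded directly by $R_0^{(j)}+\epsilon^{\anchor}_{\opt}(x^{(j)})$, without dividing by $\Delta$. For $i\neq j$, use $|T_{i,j}-\eta^{\,\noisy}_i(x)|\le\beta(x)\le (R_0^{(j)}+\epsilon^{\anchor}_{\opt})/\Delta=\tfrac{C_T}{2}(\cdots)$. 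Both cases are at most $C_T(R_0^{(j)}+\epsilon^{\anchor}_{\opt}(x^{(j)}))$.

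The main obstacle is this constant bookkeeping: the crude Step 1 bound loses an extra factor of $1/\Delta$, so the argument has to route through the exact identity for $\eta^{\,\noisy}_j$ and use the bound $\beta\le(\cdots)/\Delta$ only once. I expect Theorem~\ref{thm:performance_guarantees} to proceed the same way, with the two-sided comparison between clean and noisy false discovery rates applied to conditional probabilities given acceptance instead of given $X=x$.
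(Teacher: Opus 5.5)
Your argument is correct and is essentially the paper's proof. The paper applies Lemma~\ref{lemma:main} with the family of singletons $\{\{x\}: x\in\set{X}\}$ and $A=\{x^{(j)}\}$, and uses the same three ingredients you use: the triangle inequality, the bound $1-\eta_j(x)\le (T_{j,j}-\eta^{\,\noisy}_j(x))/\Delta$, and the comparison $T_{j,j}-\max_x\eta^{\,\noisy}_j(x)\le R_0^{(j)}$.

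The one real difference is your centered bound $|T_{i,j}-\eta^{\,\noisy}_i(x)|\le \beta(x)$, which is a factor $2$ tighter than the paper's H\"older step $\|e_j-\boldsymbol{\eta}(x)\|_1=2\beta(x)$. The ``obstacle'' in Step~3 is therefore self-inflicted by the needless weakening to $\tfrac{C_T}{2}\beta(x)$ in Step~1. Combining $|T_{i,j}-\eta^{\,\noisy}_i(x^{(j)})|\le\beta(x^{(j)})$ with Step~2 for every $i$, including $i=j$, already yields the claim with $C_T/2$ in place of $C_T$, so the case split is unnecessary.
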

\vspace{-1.5em}
\begin{proof}
    See Appendix~\ref{proof:prop:anchors}.
\end{proof}
A comparison between Theorem~\ref{thm:performance_guarantees} and Proposition~\ref{prop:anchors} indicates that the proposed methodology addresses the main limitation of the anchor-based method.
More precisely, Proposition~\ref{prop:anchors} shows that the anchor-based estimator suffers from the curse of dimensionality since it relies on an accurate class-posterior estimate.
In particular, there are scenarios where the last term of the upper bound in \eqref{eq:anchors} is exactly of the order $n^{-\alpha/(2\alpha +d)}$ \cite{gyorfi02,lei14}, which is a very slow rate for real-world high-dimensional datasets.

The value of $R_0^{(j)}$ quantifies deviations from the anchor-point assumption that corresponds to $R_0^{(j)}$ being equal to zero. 
The value $R_\gamma^{(j)}$ in Theorem~\ref{thm:performance_guarantees} above plays a similar role to $R_0^{(j)}$ but providing more detailed information about the hardness of estimating the transition matrix, as described above.
In particular, we have that $R_\gamma^{(j)}$ approaches $R_0^{(j)}$ when $\gamma\to0$, and the speed with which $R_\gamma^{(j)}$ decreases with $\gamma$ determines the best possible rate of decrease for the estimation error, as described above.

The suboptimality term for the anchor-based method $\epsilon_{\opt}^{\anchor}$ also depends strongly on the pointwise estimation error of the class-posterior since $x^{(j)}$ is obtained as the instance with the highest class-posterior estimate.
Therefore, there are scenarios where $\epsilon_{\opt}^{\anchor} \asymp n^{-\alpha/(2\alpha +d)}$.
In contrast, the suboptimality term for the presented methodology $\epsilon_{\opt}$ is the excess false discovery rate in a problem of selective classification that is small under realistic assumptions.
Specifically, the next section presents computationally tractable algorithms that provide selection functions with small suboptimality gap $\epsilon_{\opt}$.

\section{Tractable algorithms with finite-sample performance guarantees}
\label{sec:algorithms}
The learning methodology from Section~\ref{sec:classification} can be implemented utilizing general techniques for selective classification (also known as classification with rejection/abstention) \cite{el-yaniv10,geifman17,cortes16}.
In the following we present two effective approaches and provide their corresponding finite-sample performance guarantees.

\subsection{Threshold selection approach}
A prevalent approach within selective classification, classification with abstention, and classification with generalized metrics, is to rely on the confidence scores of a standard binary classifier \cite{koyejo14,geifman17}.
This approach can be adapted to our methodology for estimating the transition matrix as follows.
First, split the noisy samples into two disjoint sets.
Then, for each class $j\in\set{Y}$, the algorithm estimates the corresponding column of $\T$ through the following procedure.
\vspace{-0.9em}
\begin{enumerate}
\setlength\itemsep{-0.7em}
    \item Learn a binary scoring function $s^{(j)}\colon \set{X}\to\R$ to distinguish class $\widetilde Y = j$ from all others, using the samples in the first split.
    For every $\tau\in\R$, define a selection function as \mbox{$h^{{(j)}}_{\tau}(x) = \accept$} if $s^{(j)}(x)\geq \tau$, and $h^{{(j)}}_{\tau}(x) = \reject$ otherwise.

    \item Select the threshold $\widehat \tau$ that minimizes the empirical version of $\P(\widetilde Y \neq j\mid  h_{\tau}^{(j)}(X) = \accept)$ (i.e., that maximizes $\widehat T_{j,j}(h^{(j)}_{\tau})$) while satisfying an empirical minimum coverage constraint, using the samples in the second split.

    \item Estimate the $(i,j)$-th entry of $\T$ as $\widehat T_{i,j}( h_{\widehat \tau}^{(j)})$, using the samples in the second split.
\end{enumerate}
\vspace{-0.5em}
This procedure is formalized in Algorithm~\ref{alg:threshold} and theoretically analyzed in Theorem~\ref{thm:threshold}.

\vspace{-0.5em}

\begin{algorithm}[!h]
\small
\caption{Transition matrix estimation via threshold selection}\label{alg:threshold}
\begin{algorithmic}[1]
\Require $\{(x_k,\widetilde y_k)\}_{k=1}^n$ and lower bound for the number of accepted samples $N_+>0$.
\State Split $\{1,2,\ldots,n\}$ into two disjoint sets, ${S}_1$ (size $m_1$) and ${S}_2 $ (size $m_2\geq N_+$)
\For{$j\in\set{Y}$}
\State $T_{\temp}\gets 0$
\State \parbox[t]{\dimexpr\linewidth-\algorithmicindent}{Learn a binary score function $s^{(j)} \colon \set{X}\to \R$ to distinguish class $\widetilde Y = j$ from all others, using the samples indexed by $S_1$ \label{step:learn_score}}
\State $(\tau_r)_{r=1}^{m_2}\gets \texttt{Sort}(\{s^{(j)}(x_k)\}_{k\in S_2})$ sort the scores in descending order 
\For{$r = N_+,\ldots, m_2$}
\State Define $h^{(j)}_{\tau_r}(x) = \accept$ if $s^{(j)}(x)\geq \tau_r$, and $h^{(j)}_{\tau_r}(x) = \reject$ otherwise, for every $x\in\set{X}$
\If{$\widehat T_{j,j}(h^{(j)}_{\tau_r};S_2) \geq T_{\temp}$ }
    \State $\widehat\tau \gets \tau_r$
    \State $T_{\temp} \gets \widehat T_{j,j}(h^{(j)}_{\widehat \tau};S_2)$
        \State $\widehat T_{i,j}\gets \widehat T_{i,j}(h^{(j)}_{\widehat  \tau};S_2)$ for $i\in\set{Y}$
\EndIf
\EndFor
\EndFor\\
\Return $\widehat \T$
\end{algorithmic}
\end{algorithm}
\vspace{-0.5em}
In the next theorem, we show that the error of Algorithm~\ref{alg:threshold} depends on the ranking excess risk of the learned score function in  Step~\ref{step:learn_score}.

\begin{theorem}
    \label{thm:threshold}
    Let $\widehat \T$ be the output of Algorithm~\ref{alg:threshold} and fix $j\in\set{Y}$.
    Let \mbox{$\gamma = \P(h_{\widehat \tau}^{(j)}(X)=\accept)$} be the minimum coverage, that is $\gamma = \P(s^{(j)}(X) \geq \widehat\tau)$.

    \vspace{-0.8em}
    If $\widehat\tau_\eta$ satisfies \mbox{$\gamma=\P( \eta_j^{\, \noisy}(X) \geq \widehat\tau_\eta)=\P(s^{(j)}(X) \geq \widehat\tau)$}, and we take
    \begin{align*}
        \mathcal{E}_{\text{rank}} = \P(\widetilde Y \neq j,  s^{(j)}(X) \geq \widehat\tau,  \eta_j^{\, \noisy}(X) < \widehat\tau_\eta)-   \P(\widetilde Y \neq j, s^{(j)}(X) <\widehat\tau,  \eta_j^{\, \noisy}(X) \geq \widehat\tau_\eta)
    \end{align*}
    then
    \begin{align}
       \max_{i\in\set{Y}} |T_{i,j}-\widehat T_{i,j} | \leq C_T \left( R_{\gamma}^{(j)}+ \frac{\mathcal{E}_{\text{rank}}}{\gamma}  \right)+ \sqrt{\frac{1000(\log(8 m_2)+\log(4|\set{Y}|/\delta))}{\#_{m_2}(  h_{\widehat \tau}^{(j)}   )}}
    \end{align}
    holds with probability at least $1-\delta$ where $\#_{m_2}(  h_{\widehat \tau}^{(j)}   )$ is the number of accepted samples in the second split.
\end{theorem}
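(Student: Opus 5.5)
The proof reduces to Theorem~\ref{thm:performance_guarantees}, which gives a bias term $C_T(R_\gamma^{(j)}+\epsilon_{\opt})$ and a concentration term. Two changes are needed. First, $\epsilon_{\opt}(h^{(j)}_{\widehat\tau})$ must be bounded by $\mathcal{E}_{\text{rank}}/\gamma$. Second, the concentration step must be made uniform over thresholds, since $\widehat\tau$ is chosen from the same split $S_2$ used to form $\widehat T_{i,j}$.

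\textbf{Population-level deterministic step.} First I would show that thresholding the true noisy posterior is optimal. For any $h$ with $\P(h(X)=\accept)\ge\gamma$, the set $\{\eta_j^{\noisy}\ge\widehat\tau_\eta\}$ achieves false discovery rate at most $\widetilde R_\gamma^{(j)}$. The justification is a Neyman--Pearson argument on
\begin{align*}
\P(\widetilde Y\neq j\mid A)=1-\E[\eta^{\noisy}_j(X)\mid X\in A).
\end{align*}
The average of $\eta_j^{\noisy}$ over a set of mass at least $\gamma$ is maximized by its top-$\gamma$ level set. Sets of larger mass only lower the average, so the minimum is attained at mass exactly $\gamma$. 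Because
\begin{align*}
\P(s^{(j)}(X)\ge\widehat\tau)=\P(\eta_j^{\noisy}(X)\ge\widehat\tau_\eta)=\gamma,
\end{align*}
both sets have the same mass. Their false discovery rates therefore differ by $1/\gamma$ times the difference of the joint probabilities $\P(\widetilde Y\neq j,\cdot)$. Splitting each set into its intersection with the other and the remainder, the intersection cancels. This gives exactly $\epsilon_{\opt}(h^{(j)}_{\widehat\tau})\le\mathcal{E}_{\text{rank}}/\gamma$.

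\textbf{Reuse of the bias bound.} I would then reuse the deterministic part of the proof of Theorem~\ref{thm:performance_guarantees}. With $q_i=\P(\widetilde Y=i\mid h(X)=\accept)$, that proof bounds $|T_{i,j}-q_i|\le C_T(R_\gamma^{(j)}+\epsilon_{\opt})$ via class-dependent noise and diagonal dominance. The coverage of $h^{(j)}_{\widehat\tau}$ is exactly $\gamma$, so the hypothesis is met. It then remains to control $|q_i-\widehat T_{i,j}(h^{(j)}_{\widehat\tau};S_2)|$.

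\textbf{Uniform concentration (main obstacle).} Conditional on $S_1$, the score $s^{(j)}$ is fixed. The thresholds then form a nested one-parameter family $\{x: s^{(j)}(x)\ge\tau\}$, whose behavior on $m_2$ points has at most $m_2+1$ distinct patterns (VC dimension $1$). For each $i$ I would consider the classes
\begin{align*}
\{(x,y):s^{(j)}(x)\ge\tau,\ y=i\},
\end{align*}
which are likewise of VC dimension at most $2$. The aim is a relative-deviation bound for a ratio of empirical frequencies, uniformly over $\tau$ and $i$, of the form
\begin{align*}
|q_i(\tau)-\widehat T_{i,j}(h_\tau)|\lesssim\sqrt{(\log m_2+\log(|\set{Y}|/\delta))/\#_{m_2}(h_\tau)}.
\end{align*}
A plain union bound with Hoeffding fails, because the denominator is random and the threshold is data-dependent. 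I would first try a multiplicative (Bernstein/relative) VC inequality, symmetrization with growth function $8m_2$, applied to numerator and denominator. I would then write $\widehat T-q=(\widehat N-qN\cdot)/\widehat D$ and control the centered count $\sum_k \I\{h_\tau(x_k)=\accept\}(\I\{\widetilde y_k=i\}-q_i(\tau))$ by a normalized uniform deviation scaled by $\sqrt{\widehat D}$. An alternative is to condition on the accepted points: $\widetilde y_k$ is conditionally i.i.d.\ given acceptance for a fixed $\tau$, and there are only $m_2$ relevant thresholds to union bound over. Careful constant tracking, with the union over $|\set{Y}|$ and the split of $\delta$, should yield the stated $1000(\log(8m_2)+\log(4|\set{Y}|/\delta))$. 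Combining the three steps with a triangle inequality completes the proof.
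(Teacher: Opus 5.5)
Your proposal is correct and matches the paper's proof. The paper applies Lemma~\ref{lemma:main} (the deterministic part of Theorem~\ref{thm:performance_guarantees}) with $A=\{x: s^{(j)}(x)\geq\widehat\tau\}$, and uses that thresholding $\eta_j^{\,\noisy}$ at $\widehat\tau_\eta$ attains $\widetilde R_\gamma^{(j)}$, so equal coverage and cancellation of the intersection give $\epsilon_{\opt}(h^{(j)}_{\widehat\tau})=\mathcal{E}_{\text{rank}}/\gamma$; it then controls the estimation term by a uniform bound for empirical conditional probabilities over the threshold sets on $S_2$, followed by a union bound over $i\in\set{Y}$.

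The differences are small. The paper does not re-derive the uniform bound; it cites Theorem~8 in \cite{balsubramani19} with VC dimension one, which is exactly where the constant $1000$ and the terms $\log(8m_2)$ and $\log(4/\delta)$ come from, so no constant tracking is needed. Your Neyman--Pearson argument justifies the optimality of posterior thresholding, which the paper only asserts.
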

\vspace{-0.6em}
\begin{proof}
    See Appendix~\ref{proof:thm:threshold}
\end{proof}

The theorem above shows that the error of Algorithm~\ref{alg:threshold} is small when the learned score function is close to being order preserving with respect to the noisy class-posterior, i.e., $\mathcal{E}_{\text{rank}}$ is close to 0.
In particular, if the score is order preserving (i.e., for every pair of instances \mbox{$x, x'\in\set{X}$},  \mbox{$s^{(j)}(x) <  s^{(j)}(x') \implies \eta_j^{\, \noisy}(x)\leq\eta_j^{\, \noisy}(x')$}) we have that $\mathcal{E}_{\text{rank}}=0$.
This follows since in that case $\{ x \in \set{X} : s^{(j)}(x) < \widehat \tau \} \subseteq \{ x \in \set{X} : \eta^{\, \noisy}_j(x) < \widehat\tau_\eta \}$, and both sets have the same probability.
Notice also that the existence of $\widehat\tau_\eta$ is guaranteed if $\eta_j^{\, \noisy}(X)$ is continuously distributed.

The requirement of a low ranking excess risk $\mathcal{E}_{\text{rank}}$ is milder than assuming an accurate class-posterior estimation since commonly used classifiers (e.g., SVMs or boosted trees) are more effective at producing rankings than precise posterior probabilities \cite{caruana06}.

The ranking excess risk $\mathcal{E}_{\text{rank}}$ decreases as $m_1$ grows when the score function is learned by empirical risk minimization over a hypothesis class $\set{F}$.
Specifically, results for the bipartite ranking problem in \cite{clemencon08} can be used to show that the ranking excess risk satisfies
\vspace{0.5em}
\begin{align*}
        \mathcal{E}_{\text{rank}} \lesssim   \set{A}_{\text{rank}}(\set{F})+\sqrt{\frac{\VC{(\set{F})}\log(m_1)}{m_1}}
\end{align*}%
\vspace{0.5em}%
where $\set{A}_{\text{rank}}(\set{F})$ is the approximation error of the hypothesis class $\set{F}$ for the ranking loss in \cite{clemencon08} (difference between the best in class risk and the Bayes risk).

The hyperparameter $N_+$ serves to control the minimum coverage $\gamma$, which is of order $N_+/{m_2}$.
Specifically, by Hoeffding's inequality we have that $\gamma \geq {N_+}/{m_2} - O(1/\sqrt{m_2})$ with high probability.
In practice, choosing $N_+$ involves a trade-off since it should be large enough to accept a sufficient number of instances (to reduce the variance), while remaining small enough to maintain a low false discovery rate (to minimize the bias). 
We defer a detailed discussion on the effect of $N_+$ to Appendix~\ref{app:algorithms}.

In the following we present another algorithm for estimating $\T$ that satisfies further refined finite-sample guarantees.
\vspace{0.5em}
\subsection{Cost-sensitive risk minimization approach}
A common approach in selective classification involves penalizing misclassifications and assigning a tunable cost to rejections \cite{cortes16,cortes24}.
This idea can be adapted to our methodology for estimating the transition matrix as follows.
First, split the noisy samples into two disjoint sets.
Then, for each class $j\in\set{Y}$, the algorithm estimates the corresponding column of $\T$ through the following procedure.
\begin{enumerate}
    \item Learn a binary classifier $h^{(j)}_c \colon\set{X}\to\{\accept,\reject\}$ where $\accept$ targets $\widetilde Y = j$ and $\reject$ targets $\widetilde Y \neq j$, by penalizing the misclassification of the class $\widetilde Y = j$ and assigning a cost $c\in(0,1)$ to all instances classified as $\reject$.
    That is, we learn $h^{(j)}_c$ by minimizing the loss
    \vspace{0.25em}
    \begin{align}
        \ell_c^{(j)}(h(x), \widetilde y) = \I\{ \widetilde  y \neq j , h(x) = \accept \} + c \I\{ h(x) = \reject \}.
        \label{eq:ell_c}
    \end{align}
    \item Select the cost $\widehat c$ that minimizes the empirical version of $\P(\widetilde Y \neq j\mid  h_c^{(j)}(X) = \accept)$ (i.e., that maximizes $\widehat T_{j,j}( h_{c}^{(j)})$) while satisfying an empirical minimum coverage constraint, using the samples in the second split.

    \item Estimate the $(i,j)$-th entry of $\T$ as $\widehat T_{i,j}( h_{\widehat c}^{(j)})$, using the samples in the second split.
\end{enumerate}
\vspace{0.2em}
This procedure is formalized in Algorithm~\ref{alg:cost_dependent} and theoretically analyzed in Theorem~\ref{thm:cost_dependent}, where we prove that the error of Algorithm~\ref{alg:cost_dependent} depends on the (classification) excess risk of the solution to the cost-sensitive binary classification problem in Step~\ref{step:cost_sensitive_optimization} of Algorithm~\ref{alg:cost_dependent}. 
    \vspace{0.5em}
    
    \begin{algorithm}[!h]
    \small
    \caption{Transition matrix estimation via cost-sensitive risk minimization}\label{alg:cost_dependent}
    \begin{algorithmic}[1]
    \Require $\{(x_k,\widetilde y_k)\}_{k=1}^n$, grid spacing $\epsilon$ and lower bound for the number of accepted samples $N_+>0$.
    \State Split $\{1,2,\ldots,n\}$ into two disjoint sets, ${S}_1$ (size $m_1$) and ${S}_2 $ (size $m_2\geq N_+$)
    \For{$j\in\set{Y}$}
    \State $T_{\temp}\gets 0$
    \For{$q = 1,2,\ldots, \floor{1/\epsilon}$}
    \State $c\gets \epsilon q$
    \State \parbox[t]{\dimexpr\linewidth-\algorithmicindent}{Learn a cost-sensitive binary classifier $h^{(j)}_c$ with costs given by $c$ as in \eqref{eq:ell_c}, using the samples\\ indexed by $S_1$ \label{step:cost_sensitive_optimization}}
    \If{$ \#_{m_2}( h_c^{(j)} ) \geq N_+ \text{ \textbf{and} }\widehat T_{j,j}( h_{c}^{(j)};S_2) \geq T_{\temp}$ } 
        \State $\widehat c \gets c$
        \State $T_{\temp} \gets \widehat T_{j,j}(h_{\widehat c}^{(j)};S_2)$
            \State $\widehat T_{i,j}\gets \widehat T_{i,j}(h_{\widehat c}^{(j)};S_2)$ for $i\in\set{Y}$
    \EndIf
    \EndFor
    \EndFor\\
    \Return $\widehat \T$
    \end{algorithmic}
    \end{algorithm}
    

    \begin{theorem}
    \label{thm:cost_dependent}
    Let $\widehat \T$ be the output of Algorithm~\ref{alg:cost_dependent} and fix $j\in\set{Y}$.
    Let $\gamma = \P(h_{\widehat c}^{(j)}(X)= \accept)$ be the minimum coverage. 
    If $h_{\widehat c}^{(j)}$ is the classifier corresponding with the output for column $j$ and we take
    \begin{align*}
        \mathcal{E}_{\text{cost}}=\E[\ell_{\widehat c}^{(j)}( h_{\widehat c}^{(j)}(X),\widetilde Y)] - \inf_{h}\E[\ell_{\widehat c}^{(j)}( h(X),\widetilde Y)]
    \end{align*}
    
    \vspace{-1.5em}
    then,
        \vspace{-0.7em}
        \begin{align}
           \max_{i\in\set{Y}} |T_{i,j}-\widehat T_{i,j} | \leq C_T\left( R_{\gamma}^{(j)} +  \frac{\mathcal{E}_{\text{cost}}}{\gamma} \right) + \sqrt{\frac{2\log(|\set{Y}|/(\epsilon\delta))}{\#_{m_2}( h_{\widehat c}^{(j)})}}
        \end{align}

        \vspace{-1.2em}
        holds with probability at least $1-\delta$ where $\#_{m_2}( h_{\widehat c}^{(j)})$ is the number of accepted samples in the second split.
    \end{theorem}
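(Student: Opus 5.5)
We reduce the statement to Theorem~\ref{thm:performance_guarantees}, applied to the data-dependent classifier $h_{\widehat c}^{(j)}$ with coverage $\gamma=\P(h_{\widehat c}^{(j)}(X)=\accept)$. Two ingredients are needed.

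\textbf{(i) Bounding the excess false discovery rate.} We show $\epsilon_{\opt}(h_{\widehat c}^{(j)})\leq \mathcal{E}_{\text{cost}}/\gamma$.

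\textbf{(ii) Uniform concentration.} Theorem~\ref{thm:performance_guarantees} assumes a fixed selection function, but $\widehat c$ is chosen using $S_2$. We make its concentration step uniform over the grid.

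For (i), fix $c=\widehat c$ and write $A=\{h(X)=\accept\}$ for any $h$. Then
\begin{align*}
\E[\ell_c^{(j)}(h(X),\widetilde Y)]=\P(\widetilde Y\neq j, A)+c\,(1-\P(A)).
\end{align*}
Let $h^\star$ be an optimal selection function for \eqref{eq:selective_classification} at coverage exactly $\gamma$. If needed, take a Neyman--Pearson-type threshold set on $\eta^{\noisy}_j$, with randomization or a limiting argument. Since $\P(h^\star(X)=\accept)=\P(h_{\widehat c}^{(j)}(X)=\accept)=\gamma$, the rejection terms $c(1-\gamma)$ coincide. Hence
\begin{align*}
\gamma\,\P(\widetilde Y\neq j\mid h_{\widehat c}^{(j)}=\accept)-\gamma\,\widetilde R_\gamma^{(j)}
=\E[\ell_c(h_{\widehat c}^{(j)})]-\E[\ell_c(h^\star)]
\leq \E[\ell_c(h_{\widehat c}^{(j)})]-\inf_h\E[\ell_c(h)]+\big(\inf_h\E[\ell_c(h)]-\E[\ell_c(h^\star)]\big).
\end{align*}
The last bracket is $\leq 0$ trivially. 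Dividing by $\gamma$ gives $\epsilon_{\opt}\leq \mathcal{E}_{\text{cost}}/\gamma$.

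The subtle point is existence of $h^\star$ attaining $\widetilde R_\gamma^{(j)}$ with coverage exactly $\gamma$. Taking coverage larger than $\gamma$ only raises the FDR, because the optimal FDR is nondecreasing in coverage: for an accepted set of larger mass, one can sub-select its lowest-FDR portion. So we may restrict to exact coverage $\gamma$. For atoms of $\eta^{\noisy}_j$ we use an approximating sequence, or randomized selection functions.

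For (ii), note that $h_c^{(j)}$ depends only on $S_1$ for every grid value $c=\epsilon q$. Conditioned on $S_1$, the family $\{h_{\epsilon q}^{(j)}\}_{q\leq\lfloor 1/\epsilon\rfloor}$ is fixed and has at most $1/\epsilon$ members. We rerun the concentration argument of Theorem~\ref{thm:performance_guarantees} for each member:
\begin{itemize}
\item Condition on the accepted indices in $S_2$.
\item The labels of accepted samples are i.i.d.\ from $\P(\widetilde Y=\cdot\mid h=\accept)$.
\item Hoeffding gives deviation $\sqrt{\log(2/\delta')/(2\#)}$ for each $i$.
\end{itemize}
Take a union bound over $|\set{Y}|$ labels and $1/\epsilon$ costs with $\delta'=\epsilon\delta/|\set{Y}|$. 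On this event, the bound of Theorem~\ref{thm:performance_guarantees} holds simultaneously for all grid points. In particular it holds for $\widehat c$, with variance term $\sqrt{2\log(|\set{Y}|/(\epsilon\delta))/\#_{m_2}(h_{\widehat c}^{(j)})}$.

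The bias part of Theorem~\ref{thm:performance_guarantees} is deterministic given $h$. It is the relation $|T_{i,j}-\P(\widetilde Y=i\mid h=\accept)|\leq C_T(R_\gamma^{(j)}+\epsilon_{\opt})$, and it applies to each grid point with its own coverage. Combining with (i) at $\widehat c$ yields the claim.

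The main obstacle is step (i), namely that the comparator attains $\widetilde R_\gamma^{(j)}$ at exactly coverage $\gamma$. We handle it via the monotonicity of the optimal FDR in coverage, plus approximation.
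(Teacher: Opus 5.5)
Your proposal is correct and follows the paper's proof essentially step for step. The paper uses the same three ingredients: the deterministic bias bound (its Lemma~\ref{lemma:main}), a conditional Chernoff/Hoeffding bound made uniform by a union bound over the labels and the $\lfloor 1/\epsilon\rfloor$ grid classifiers fixed by $S_1$, and an equal-coverage comparator whose rejection cost $\widehat c(1-\gamma)$ cancels.

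The paper simply assumes a minimizer $h_*^{(j)}$ with coverage exactly $\gamma$, so your extra care there is worthwhile. When the marginal of $X$ has atoms, however, only your randomized option works; the approximating-sequence option and the sub-selection monotonicity argument can fail. Concretely, take any $A'$ with $\P(X\in A')\geq\gamma$ and accept it with probability $\gamma/\P(X\in A')$. This rule has coverage $\gamma$ and the same false discovery rate as $A'$. Its cost-sensitive risk is at least $\E[\min\{1-\eta_j^{\,\noisy}(X),\widehat c\}]=\inf_h\E[\ell_{\widehat c}^{(j)}(h(X),\widetilde Y)]$, which is exactly what the comparison step needs.
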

    \vspace{-1.2em}
    \begin{proof}
        See Appendix~\ref{proof:thm:cost_dependent}.
    \end{proof}
    \vspace{-0.9em}
    
    Theorem~\ref{thm:cost_dependent} shows that the error of Algorithm~\ref{alg:cost_dependent} is small, provided that the binary cost-sensitive classification problem in Step~\ref{step:cost_sensitive_optimization} is solved with a small excess risk $\mathcal{E}_\text{cost}$.
    This requirement is significantly milder than that of a pointwise accurate class-posterior estimate, and the small ranking excess risk required by the thresholding approach in Theorem~\ref{thm:threshold}.
    In particular, achieving a small excess risk corresponds to an accurate classification of instances, whereas the thresholding approach additionally requires correct ranking of instances' probabilities.

    \vspace{-0.7em}
    In the next theorem, we provide performance guarantees for Algorithm~\ref{alg:cost_dependent} in terms of the excess risk of a convex surrogate loss and the $\psi$-transform of the surrogate from \cite[Def.~2]{bartlett06}. 
    Specifically, let $\Phi$ be a classification-calibrated convex surrogate of the 0-1 loss $t\mapsto\I\{t < 0\}$ such as the hinge loss ($\Phi(t)= \max\{0,1-t\}$) or the logistic loss ($\Phi(t) = \log(1+e^{-t})$) \cite[Def.~1]{bartlett06},
    we define the convex surrogate of $\ell_c^{(j)}$ as
    \begin{align*}
        L_c^{(j)}( f(x),\widetilde y) = \I\{ \widetilde y \neq j\} \Phi(- f(x)) + c \Phi(f(x))
    \end{align*}
    where $f$ is a binary score function.

    \begin{theorem}
    \label{thm:cost_dependent_surrogate}
    Let $\widehat \T$ be the output of Algorithm~\ref{alg:cost_dependent} assuming that the classifiers learned in Step~\ref{step:cost_sensitive_optimization} take the form $h_c^{(j)}(x) = \sign(f_c^{(j)}(x))$ for $x\in\set{X}$ with $f_c^{(j)}$ a binary score function.
    Fix $j\in\set{Y}$, let $\gamma = \P(h_{\widehat c}^{(j)}(X)= \accept)$ be the minimum coverage and let
    \vspace{-0.4em}
    \begin{align*}
        \mathcal{E}_\Phi=\E[L_{\widehat c}^{(j)}( f_{\widehat c}^{(j)}(X),\widetilde Y)] - \inf_{f}\E[L_{\widehat c}^{(j)}( f(X),\widetilde Y)]
    \end{align*}

    \vspace{-1.5em}
    be the excess risk of the convex surrogate loss.
    Then 
    \vspace{-0.7em}
        \begin{align}
           \max_{i\in\set{Y}} |T_{i,j}-\widehat T_{i,j} | \leq C_T\left( R_{\gamma}^{(j)} +  \frac{2\psi_\Phi^{-1}\left(\mathcal{E}_\Phi/2\right)}{\gamma} \right) + \sqrt{\frac{2\log(|\set{Y}|/(\epsilon\delta))}{\#_{m_2}( h_{\widehat c}^{(j)})}}
        \end{align}
        
        \vspace{-1.7em}
        holds with probability at least $1-\delta$.
    \end{theorem}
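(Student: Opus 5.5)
The plan is to reduce to Theorem~\ref{thm:cost_dependent}. Its bound has exactly the same form, with $\mathcal{E}_{\text{cost}}$ in place of $2\psi_\Phi^{-1}(\mathcal{E}_\Phi/2)$. So it suffices to show the deterministic inequality
\begin{align*}
\mathcal{E}_{\text{cost}} \leq 2\psi_\Phi^{-1}\left(\mathcal{E}_\Phi/2\right)
\end{align*}
for the selected cost $\widehat c$ and classifier $h_{\widehat c}^{(j)}=\sign(f_{\widehat c}^{(j)})$. Once this holds, the probabilistic part is inherited unchanged. The same event of probability at least $1-\delta$, obtained by a union bound over the grid of at most $1/\epsilon$ costs and the $|\set{Y}|$ labels, gives the stated bound. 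Monotonicity of $\psi_\Phi^{-1}$ lets us substitute the upper bound for $\mathcal{E}_{\text{cost}}$.

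To prove the inequality, rewrite the cost-sensitive problem as a weighted binary classification problem that matches the setting of \cite{bartlett06}. Set $Z=+1$ for $\accept$ and $Z=-1$ for $\reject$. The loss $\ell_c^{(j)}$ charges $\I\{\widetilde y\neq j\}$ when $h(x)=\accept$ and $c$ when $h(x)=\reject$. Conditionally on $X=x$, write $p(x)=1-\eta_j^{\,\noisy}(x)$. The conditional risk of predicting $\accept$ is then $p(x)$, and that of predicting $\reject$ is $c$.

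Normalise by $w(x)=p(x)+c$ and set $\tilde\eta(x)=c/w(x)$. This makes the problem an ordinary 0-1 problem with pseudo-posterior $\tilde\eta$ for the label $+1$, rescaled by $w(x)$. The surrogate $L_c^{(j)}$ has conditional risk $p\,\Phi(-f)+c\,\Phi(f)$, which equals $w(x)$ times the standard conditional $\Phi$-risk $C_{\tilde\eta}(\alpha)=\tilde\eta\Phi(\alpha)+(1-\tilde\eta)\Phi(-\alpha)$. Here the sign conventions need care, since $\sign(f)$ positive means $\accept$; I will fix them so the surrogate matches. Pointwise, the excess 0-1 risk is $w(x)\,|2\tilde\eta(x)-1|\,\I\{\text{wrong sign}\}$. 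By \cite[Thm.~1]{bartlett06}, for wrong-sign predictions
\begin{align*}
\psi_\Phi\bigl(|2\tilde\eta-1|\bigr)\leq C_{\tilde\eta}(f)-\inf_\alpha C_{\tilde\eta}(\alpha).
\end{align*}

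The main obstacle is the non-constant weight $w(x)\in(0,2)$. We need to pass from the pointwise weighted inequality to the integrated one while keeping the factor $2$. The plan is to normalise by the upper bound $w\leq 1+c\leq 2$. Let $\theta(x)=|2\tilde\eta(x)-1|\I\{\text{wrong}\}$, so the excess 0-1 risk is $\E[w\theta]$. Since $w/2\leq 1$, the measure $w(x)\,d\P(x)/2$ has total mass at most $1$. Then $\E[w\theta]/2=\int\theta\,d\mu$ with $\mu=w\P/2$. Convexity of $\psi_\Phi$ together with $\psi_\Phi(0)=0$ handles a sub-probability measure, since $\psi_\Phi(\lambda t)\leq\lambda\psi_\Phi(t)$ for $\lambda\in[0,1]$. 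Applying Jensen in this way gives
\begin{align*}
\psi_\Phi\!\left(\int\theta\,d\mu\right)\leq\int\psi_\Phi(\theta)\,d\mu\leq \tfrac12\,\E\bigl[w\,(C_{\tilde\eta}(f)-\inf C_{\tilde\eta})\bigr]=\mathcal{E}_\Phi/2.
\end{align*}
Here $\inf_f$ of the integrated risk equals the integral of the pointwise infima over measurable $f$. Hence $\psi_\Phi(\mathcal{E}_{\text{cost}}/2)\leq\mathcal{E}_\Phi/2$. Inverting, using that $\psi_\Phi$ is nondecreasing, gives $\mathcal{E}_{\text{cost}}\leq 2\psi_\Phi^{-1}(\mathcal{E}_\Phi/2)$. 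This completes the reduction to Theorem~\ref{thm:cost_dependent}.
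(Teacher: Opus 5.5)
Your proposal is correct and follows essentially the paper's route. You reduce to Theorem~\ref{thm:cost_dependent} by proving $2\psi_\Phi(\mathcal{E}_{\text{cost}}/2)\le\mathcal{E}_\Phi$, using the same normalisation $S(x)=1-\eta(x)+\widehat c$, $\eta'(x)=\widehat c/S(x)$, the bound $S\le 2$, and convexity of $\psi_\Phi$ with $\psi_\Phi(0)=0$. The only difference is cosmetic: the paper proves $S(x)\psi_\Phi(\Delta C/S(x))\ge 2\psi_\Phi(\Delta C/2)$ pointwise and then applies Jensen under $\P_X$, whereas you absorb the weight into the sub-probability measure $w\,d\P/2$ before applying Jensen.
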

    \vspace{-1.3em}
    \begin{proof}
        See Appendix~\ref{proof:thm:cost_dependent_surrogate}.
    \end{proof}
    
    Theorem~\ref{thm:cost_dependent_surrogate} guarantees that the error of Algorithm~\ref{alg:cost_dependent} remains small as long as the learning technique used in Step~\ref{step:cost_sensitive_optimization} successfully minimizes a classification-calibrated convex surrogate loss.
    Moreover, reducing the excess risk of the convex surrogate loss to zero makes the suboptimality gap ${2\psi_\Phi^{-1}\left(\mathcal{E}_\Phi/2\right)}/\gamma$ converge to zero since the $\psi_\Phi$-transform satisfies $\psi_\Phi^{-1}(t)\to0$ when $t\to0$ as shown in \cite[Thm.~1]{bartlett06}.
    \vspace{0.2em}

    In practice, the learning in Step~\ref{step:cost_sensitive_optimization} is performed by minimizing the empirical risk of the convex surrogate loss $L_c^{(j)}$ over a hypothesis class $\set{F}$. 
    By standard generalization bounds \cite{mohri18}, we have $\mathcal{E}_\Phi \lesssim \set{A}_{\text{surrogate}}(\set{F})+O(\sqrt{{\VC(\set{F})\log(m_1/\VC(\set{F})) }/{m_1}})$
    where \mbox{$\set{A}_{\text{surrogate}}(\set{F})=\inf_{f\in \set{F}}\E[L_{\widehat c}^{(j)}( f(X),\widetilde Y)]- \inf_{f}\E[L_{\widehat c}^{(j)}( f(X),\widetilde Y)]$} is the approximation error of the hypothesis class $\set{F}$ for the convex surrogate loss $L_{\widehat c}^{(j)}$.
    \vspace{0.2em}

    The choice of the convex surrogate $\Phi$ specifies the $\psi_\Phi^{-1}$-transform, which determines the convergence rate of the term ${\psi_\Phi^{-1}\left(\mathcal{E}_\Phi/2\right)}$.
    For the hinge loss ($\Phi(t)= \max\{0,1-t\}$), $\psi_\Phi$ is simply the identity function \cite{bartlett06}. Therefore, the excess risk bound scales linearly, yielding
    \vspace{0.35em}
    \begin{align*}
        \max_{i\in\set{Y}} |T_{i,j}-\widehat T_{i,j} | \lesssim  R_{\gamma}^{(j)} + \set{A}_{\text{surrogate}}(\set{F})+\sqrt{\frac{\VC{(\set{F})}\log(m_1/\VC{(\set{F}))}}{m_1}} + \sqrt{\frac{\log(|\set{Y}|/\epsilon)}{m_2}}.
    \end{align*}
    \vspace{0.15em}
    
    For the logistic loss we have $\psi_\Phi^{-1}(t)\lesssim\sqrt{t}$ \cite{bartlett06}, which introduces a square-root dependence on the excess risk resulting in a slightly slower convergence rate.    
    \vspace{0.2em}

The results above show that the methodology presented can be implemented using flexible and effective techniques for selective classification.
In particular, the theoretical results show that the proposed approach can result in significantly improved error rates that quickly decrease with the number of samples. In the following, we provide numerical results that illustrate such improved performance using real-world datasets.
\vspace{0.3em}

\subsection{Experimental results}
\label{sec:experiments}
\vspace{0.1em}
In what follows, we provide experimental results that illustrate the theoretical results presented above.
Specifically, we show that the proposed algorithms achieve significantly lower error rates than methods that rely on pointwise estimates of class-posteriors and that the error rates of the proposed algorithms converge at an order comparable to that of a strong oracle.
Furthermore, in \cref{app:tables} we provide additional experimental results on running times and dimensionality's impact.
The code implementing the methods presented and reproducing the experiments can be found at \url{https://github.com/MachineLearningBCAM/label-noise-T-estimation-NeurIPS2026}.

\vspace{0.1em}

\begin{figure*}[h]
\centering
\subfigure[Letter dataset]{%
\label{fig:letter}%
\psfrag{0.05}[cc][cc]{\scalebox{0.7}{$0.05$\;}}%
\psfrag{0.04}[cc][cc]{\scalebox{0.7}{$0.04$\;}}%
\psfrag{0.03}[cc][cc]{\scalebox{0.7}{$0.03$\;}}%
\psfrag{0.02}[cc][cc]{\scalebox{0.7}{$0.02$\;}}%
\psfrag{0.01}[cc][cc]{\scalebox{0.7}{$0.01$\;}}%
\psfrag{0.005}[cc][cc]{\scalebox{0.7}{$0.005$ \;}}%
\psfrag{0.003}[cc][cc]{\scalebox{0.7}{$0.003$ \; }}%
\psfrag{0.001}[cc][tc]{\scalebox{0.7}{$0.001$ \; }}%
\psfrag{1000}[cc][cc]{\scalebox{0.7}{$1000$}}%
\psfrag{2500}[cc][cc]{\scalebox{0.7}{$2500$}}%
\psfrag{5000}[cc][cc]{\scalebox{0.7}{$5000$}}%
\psfrag{10000}[cc][cc]{\scalebox{0.7}{$10000$}}%
\psfrag{15000}[cc][cc]{\scalebox{0.7}{$15000$}}%
\psfrag{17500}[cc][cc]{\scalebox{0.7}{$17500$}}%
\psfrag{Anchorbased}[cc][cc]{\scalebox{0.75}{Anchor-based}}%
\psfrag{Algorithm1}[cc][cc]{\scalebox{0.75}{\; Algorithm~\ref{alg:threshold}}}%
\psfrag{Algorithm2}[cc][cc]{\scalebox{0.75}{\; Algorithm~\ref{alg:cost_dependent}}}%
\psfrag{Oracle}[cc][cc]{\scalebox{0.75}{Oracle}}%
\psfrag{y}[bc][bc]{\scalebox{0.8}{MAE}}%
\psfrag{x}[tc][cb]{\scalebox{0.8}{Number of samples $n$}}%
\includegraphics[width=0.5\linewidth,trim=30 0 0 0, clip]{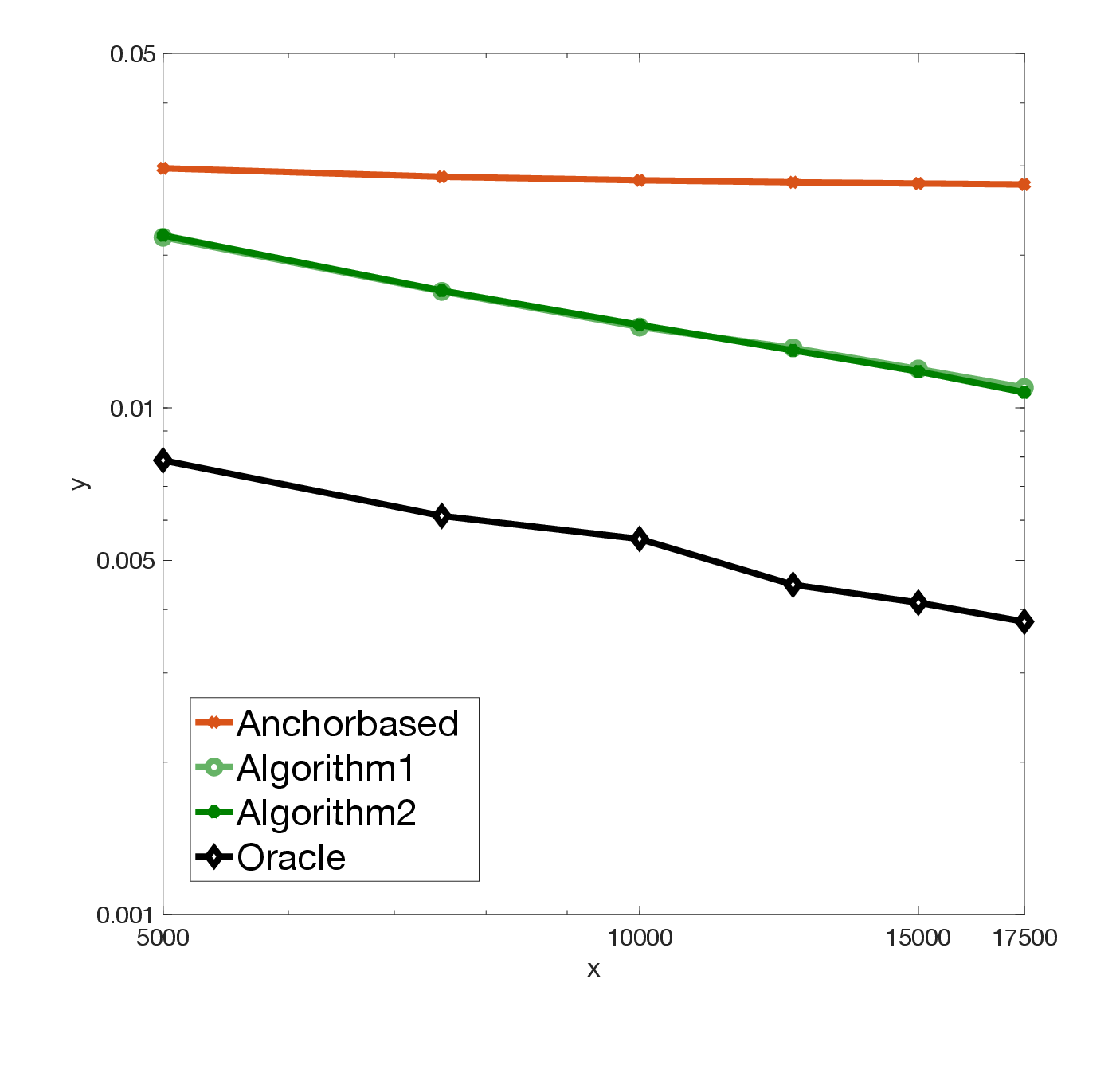}%
}%
\hfill
\subfigure[MNIST dataset]{%
\label{fig:MNIST}%
\psfrag{0.05}[cc][cc]{\scalebox{0.7}{$0.05$\;}}%
\psfrag{0.04}[cc][cc]{\scalebox{0.7}{$0.04$\;}}%
\psfrag{0.03}[cc][cc]{\scalebox{0.7}{$0.03$\;}}%
\psfrag{0.02}[cc][cc]{\scalebox{0.7}{$0.02$\;}}%
\psfrag{0.01}[cc][cc]{\scalebox{0.7}{$0.01$\;}}%
\psfrag{0.005}[cc][cc]{\scalebox{0.7}{$0.005$\;}}%
\psfrag{0.001}[cc][tc]{\scalebox{0.7}{$0.001$\; }}%
\psfrag{10}[cc][rc]{\scalebox{0.48}{$10$}}%
\psfrag{1}[cc][rc]{\scalebox{0.4}{1}}%
\psfrag{2}[cc][rc]{\scalebox{0.4}{2}}%
\psfrag{3}[cc][rc]{\scalebox{0.4}{3}}%
\psfrag{4}[cc][rc]{\scalebox{0.4}{4}}%
\psfrag{Anchorbased}[cc][cc]{\scalebox{0.75}{Anchor-based}}%
\psfrag{Algorithm11}[cc][cc]{\scalebox{0.75}{Algorithm~\ref{alg:threshold}}}%
\psfrag{Algorithm22}[cc][cc]{\scalebox{0.75}{Algorithm~\ref{alg:cost_dependent}}}%
\psfrag{Oracle}[cc][cc]{\scalebox{0.75}{Oracle}}%
\psfrag{y}[bc][bc]{\scalebox{0.8}{MAE}}%
\psfrag{x}[tc][cb]{\scalebox{0.8}{Number of samples $n$}}%
\psfrag{5000}[cc][cc]{\scalebox{0.7}{$5000$}}%
\psfrag{10000}[cc][cc]{\scalebox{0.7}{$10000$}}%
\psfrag{20000}[cc][cc]{\scalebox{0.7}{$20000$}}%
\psfrag{30000}[cc][cc]{\scalebox{0.7}{$30000$}}%
\psfrag{50000}[cc][cc]{\scalebox{0.7}{$50000$}}%
\psfrag{40000}[cc][cc]{\scalebox{0.7}{$ $}}%
\includegraphics[width=0.5\linewidth,trim=10 0 20 0, clip]{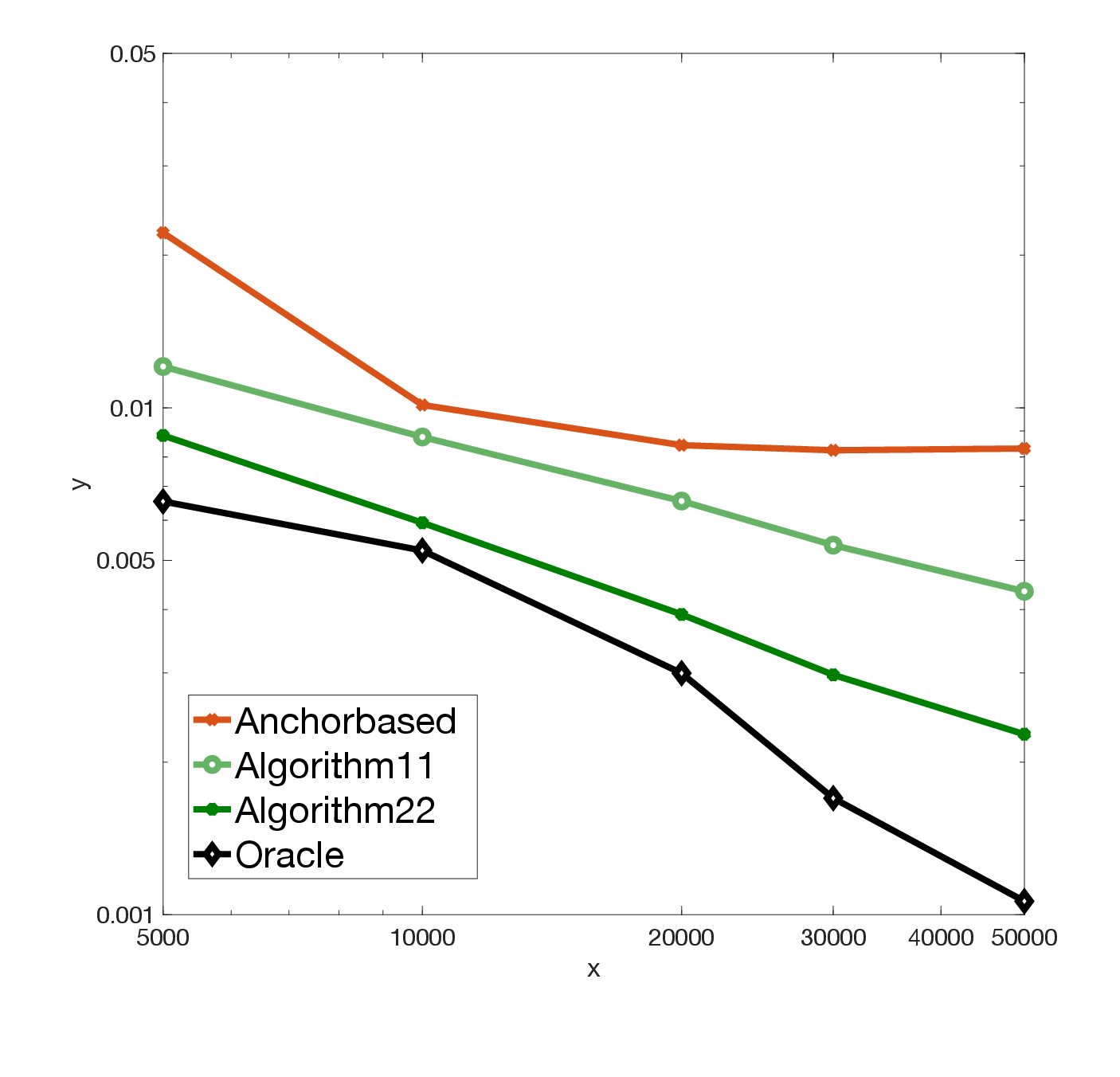}%
}%
\caption{Mean Absolute Error (MAE) of different estimators as the number of training samples $n$ increases. The errors of the presented algorithms decrease with the number of samples in a similar trend to the error of the oracle, whereas the error of the anchor-based method stagnates.}
\label{fig:num_result}
\end{figure*}

Figure~\ref{fig:num_result} shows how the error of different algorithms decreases with the number of samples $n$. 
We implement \cref{alg:threshold,alg:cost_dependent}, and the anchor-based method from \cite{patrini17} with random forests for the Letter dataset (\cref{fig:letter}) and neural networks for the MNIST dataset (\cref{fig:MNIST}). 
Both \cref{alg:threshold,alg:cost_dependent} are executed with $N_+ = n/200$ and $m_1=m_2=n/2$. 
Moreover, we evaluate the algorithms against a strong oracle that learns $\T$ via maximum likelihood estimation (MLE). 
To act as a strong benchmark reference, the oracle has access to feature representations from a clean-data pre-trained network and initializes the MLE optimization directly at the ground-truth $\T$. 
Therefore, this oracle has a very small error corresponding to the finite-sample estimation error due to finite $n$.
Further details regarding the implementation and additional experimental results are provided in Appendix~\ref{app:results}.

\vspace{0.1em}

In \cref{fig:num_result} we show that the proposed algorithms achieve significantly lower error rates than existing methods that rely on pointwise estimates of the class-posteriors, in accordance with our derived performance bounds.
Moreover, the figure shows that the errors of both the proposed algorithms and the oracle exhibit the same decreasing trend as $n$ increases, whereas the error of the anchor-based method plateaus.

\section{Conclusion}
The paper presents a new methodology for the estimation of the label-noise transition matrix based on one-sided selective classification.
In the proposed approach, the transition matrix is estimated by learning selection functions that minimize the false discovery rate subject to a minimum coverage constraint.
We provide finite-sample error bounds for the proposed methodology and show that our approach avoids the curse of dimensionality, unlike existing estimators.
Moreover, we propose effective algorithms that can be implemented with any method for binary classification, and provide their corresponding performance guarantees.
The presented methodology can help to reduce the impact of label noise on supervised learning methods, making model deployment more cost-efficient by allowing the use of data labeled through inexpensive sources.

\paragraph{Limitations:}
The proposed methodology is broadly effective but does not scale well with the number of classes, as it requires solving different binary classification problems for each class to estimate the transition matrix.
Additionally, like other estimators, the approach may struggle under class imbalance, which can affect the precision of the selection functions for minority classes.
These drawbacks are of limited relevance in practice because the sub-problems are naturally parallelizable and can be solved using standard off-the-shelf methods. 
Furthermore, as noisy labels are inexpensive to acquire, the assumption of a sufficiently large training sample is typically satisfied in real-world applications, ensuring the approach remains both computationally and statistically viable.

\section*{Acknowledgments}
Xabier de Juan and Santiago Mazuelas were supported by project PID2022-137063NB-I00 funded by MCIN/AEI/10.13039/501100011033 and the European Union “NextGenerationEU”/PRTR, BCAM Severo Ochoa accreditation CEX2021-001142-S/MICIN/AEI/10.13039/501100011033 funded by the Ministry of Science and Innovation (Spain), and programs BERC-2022--2025 and ELKARTEK funded by the Basque Government.
Yilun Zhu and Clayton Scott were supported in part by the National Science Foundation under award 2551503, and by the Department of Defense, Defense Threat Reduction Agency under award HDTRA1-20-2-0002.
Xabier de Juan acknowledges a predoctoral grant from the Basque Government.

\bibliography{refs}
\bibliographystyle{unsrt}

%% file: apendice.tex
\appendix

\section{Proofs}
\label{app:proofs}
The proofs shown below utilize the following lemma that provides general upper bounds for the estimation error.

\begin{lemma}
\label{lemma:main}
Fix $i,j\in\set{Y}$ and let $\set{F}$ be a family of measurable subsets of $\set{X}$.
Then, for every $A\in\set{F}$
    \begin{align*}
    |T_{i,j}-\widehat T_{i,j} | &\leq C_T \left( R_\set{F}^{(j)} + \sup_{A'\in \set{F}} \P(\widetilde Y = j \mid X \in A')  - \P(\widetilde Y = j \mid X \in A) \right) \\
    & \quad + |\P(\widetilde Y = i \mid X \in A)- \widehat  T_{i,j}|
    \end{align*}
    where $C_T = 2/(T_{j,j}-\max\limits_{l\neq j} T_{j,l})$ and $R_{\set{F}}^{(j)} = \inf\limits_{A'\in \set{F}} \P(Y \neq j \mid X \in A')$.
\end{lemma}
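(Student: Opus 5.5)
We prove Lemma~\ref{lemma:main} by the triangle inequality
\[
|T_{i,j}-\widehat T_{i,j}| \le |T_{i,j}-\P(\widetilde Y=i\mid X\in A)| + |\P(\widetilde Y=i\mid X\in A)-\widehat T_{i,j}|.
\]
The second term already appears in the statement. It therefore suffices to show
\[
|T_{i,j}-\P(\widetilde Y=i\mid X\in A)| \le C_T\big(R^{(j)}_{\set{F}} + \sup_{A'}\P(\widetilde Y=j\mid X\in A')-\P(\widetilde Y=j\mid X\in A)\big).
\]

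\textbf{Step 1: a total-variation style bound.} Class-dependent noise gives, for any $A$ with $\P(X\in A)>0$,
\[
\P(\widetilde Y=i\mid X\in A)=\sum_l T_{i,l}\,\pi_l(A), \qquad \pi_l(A)=\P(Y=l\mid X\in A).
\]
Hence
\[
T_{i,j}-\P(\widetilde Y=i\mid X\in A)=\sum_{l\neq j}\pi_l(A)\,(T_{i,j}-T_{i,l}).
\]
Since all entries lie in $[0,1]$, the absolute value is at most $\sum_{l\ne j}\pi_l(A)=\P(Y\neq j\mid X\in A)=:\beta(A)$. So it remains to bound $\beta(A)$ by $(C_T/2)\cdot(\ldots)$; the factor $2$ in $C_T$ is slack we can spend.

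\textbf{Step 2: relate $\beta(A)$ to noisy quantities, using row-diagonal dominance in row $j$.} Set $\Delta=T_{j,j}-\max_{l\neq j}T_{j,l}>0$. Applying the same identity with $i=j$ gives
\[
\P(\widetilde Y=j\mid X\in A)=T_{j,j}-\sum_{l\ne j}\pi_l(A)(T_{j,j}-T_{j,l}).
\]
The sum satisfies
\[
\Delta\,\beta(A)\le \sum_{l\ne j}\pi_l(A)(T_{j,j}-T_{j,l}) \le \beta(A),
\]
which yields two consequences:
\begin{itemize}
\item[(a)] $\P(\widetilde Y=j\mid X\in A)\le T_{j,j}-\Delta\,\beta(A)$;
\item[(b)] for every $A'\in\set{F}$, $\P(\widetilde Y=j\mid X\in A')\ge T_{j,j}-\beta(A')$, hence $\sup_{A'}\P(\widetilde Y=j\mid X\in A')\ge T_{j,j}-R^{(j)}_{\set{F}}$, taking $A'$ along a sequence approaching the infimum.
\end{itemize}
Subtracting (a) from (b) gives
\[
\Delta\,\beta(A)\le R^{(j)}_{\set{F}}+\sup_{A'}\P(\widetilde Y=j\mid X\in A')-\P(\widetilde Y=j\mid X\in A).
\]
Thus $\beta(A)\le (C_T/2)(\cdots)\le C_T(\cdots)$, and combining with Step~1 finishes the proof.

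\textbf{Main obstacle.} The delicate point is using diagonal dominance correctly. It must be applied along row $j$, comparing $T_{j,j}$ with $T_{j,l}$, and the proof needs both the upper bound $\Delta\beta$ and the trivial bound $\beta$ on the same weighted sum: one for $A$, the other for the near-optimal $A'$. We should also handle measure-zero sets (implicitly excluded from $\set{F}$) and the possibility that the infimum is not attained; the latter is handled by taking an $\varepsilon$-approximate $A'$ and letting $\varepsilon\to0$.
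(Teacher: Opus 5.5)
Your proof is correct and follows essentially the same route as the paper's: the triangle inequality, a bound of $|T_{i,j}-\P(\widetilde Y=i\mid X\in A)|$ by a multiple of $\beta(A)=\P(Y\neq j\mid X\in A)$, the row-$j$ dominance inequality $\Delta\,\beta(A)\le T_{j,j}-\P(\widetilde Y=j\mid X\in A)$ (which the paper obtains by rearranging the same mixture identity), and $T_{j,j}-\sup_{A'\in\set{F}}\P(\widetilde Y=j\mid X\in A')\le R^{(j)}_{\set{F}}$ via $\P(\widetilde Y=j\mid X\in A')\ge T_{j,j}\P(Y=j\mid X\in A')$. The one difference is in Step 1: you use that $e_j-\pi(A)$ sums to zero and get $\beta(A)$, whereas the paper applies H\"older's inequality and gets $\|e_j-\pi(A)\|_1=2\beta(A)$, so your argument proves the lemma with the sharper constant $C_T/2$ in place of $C_T$.
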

\begin{proof}
By the triangle inequality we get
\begin{align*}
    |T_{i,j}-\widehat T_{i,j} | \leq |T_{i,j} -\P(\widetilde Y = i \mid X \in A)|+ |\P(\widetilde Y = i \mid X \in A)- \widehat T_{i,j}| .
\end{align*}

We apply Hölder's inequality in the first term so that
\begin{align*}
    |T_{i,j} -\P(\widetilde Y = i \mid X \in A)| &= \left| T_{i,j} - \sum_{l=1}^{|\set{Y}|} T_{i,l} \P(Y = l \mid X \in A)\right|\\
    &= |(T_{i,1}, T_{i,2},\ldots, T_{i,|\set{Y}|})^{\up{T}}(e_j - \P(\bm{Y}  \mid X \in A))| \\
    &\leq  \norm{e_j - \P( \bm{Y} \mid X \in A)}_1 = 2(1 - \P(Y= j \mid X \in A))
\end{align*}
where $e_j$ denotes the $j$-th vector from the canonical basis and $ \P(\bm{Y}\mid X \in A)$ is the vector whose $t$-th component is $ \P(Y = t \mid X \in A)$.

Using the definition of $C_T$, we have
\begin{align}
    1 - \P(Y= j \mid X \in A) \leq \frac{C_T}{2}({T_{j,j}-\P(\widetilde Y = j \mid X \in A)}).
    \label{eq:ineq1}
\end{align}
Such a bound can be obtained as follows.
\begin{align}
        \P(\widetilde Y = j \mid X \in A ) &
        = \sum_{l=1}^{|\set{Y}|} T_{j,l} \P( Y = l \mid X \in A)  \nonumber \\
        &= T_{j,j} \P(Y=j \mid X \in A) + \sum_{l=1,l\neq j}^{|\set{Y}|} T_{j,l} \P( Y = l \mid X \in A) \label{eq2}. 
\end{align}
Therefore,
    \begin{align*}
        \P(Y=j \mid X \in A) &= \frac{\P(\widetilde Y = j \mid X \in A ) - \sum_{l=1,l\neq j}^{|\set{Y}|}  T_{j,l} \P(Y=l\mid X \in A)}{T_{j,j}}\\
        &\geq \frac{\P(\widetilde Y = j \mid X \in A ) - \max_{l\neq j} \{T_{j,l} \} ( 1-  \P(Y=j\mid X \in A))}{T_{j,j}}.
    \end{align*}
    Thus, 
    \begin{align*}
        \left(1 - \frac{\max_{l\neq j} T_{j,l} }{T_{j,j}} \right) \P(Y=j \mid X \in A)  \geq \frac{\P(\widetilde Y = j \mid X \in A )-\max_{l\neq j} \{T_{j,l} \}}{T_{j,j}}
    \end{align*}
    whence \eqref{eq:ineq1} follows since ${\max\limits_{l\neq j}\{T_{j,l} \} }/{T_{j,j}} < 1$.

The right hand side in \eqref{eq:ineq1} can be written as
\begin{align*}
    T_{j,j}-\P(\widetilde Y = j \mid X \in A) &=  T_{j,j}  - \sup_{A'\in\set{F}}  \P(\widetilde Y = j \mid X \in A') \\
    & \quad + \sup_{A'\in\set{F}}  \P(\widetilde Y = j \mid X \in A') -\P(\widetilde Y = j \mid X \in A).
\end{align*}
Then, the result is obtained because
\begin{align*}
    T_{j,j} - \sup_{A'\in\set{F}}  \P(\widetilde Y = j \mid X \in A') \leq 1-\sup_{A'\in\set{F}}  \P( Y = j \mid X \in A') = R_{\set{F}}^{(j)}
\end{align*}
since
\begin{align*}
    \sup_{A'\in\set{F}}  \P(\widetilde Y = j \mid X \in A') \geq T_{j,j} \sup_{A'\in\set{F}}  \P( Y = j \mid X \in A')
\end{align*}
as a consequence of \eqref{eq2}, and we also have
\begin{align*}
    T_{j,j} \sup_{A'\in\set{F}}  \P( Y = j \mid X \in A') \geq T_{j,j} +  \sup_{A'\in\set{F}}  \P( Y = j \mid X \in A') -1
\end{align*}
since $(T_{j,j}-1)\big( \sup\limits_{A'\in\set{F}}  \P( Y = j \mid X \in A')  -1\big) \geq 0$.
\end{proof}

\subsection{Proof of Theorem~\ref{thm:performance_guarantees}}
\label{proof:thm:performance_guarantees}

    The result is a particular case of Lemma~\ref{lemma:main} taking
    \begin{align*}
         \set{F}= \{A_h \subseteq \set{X}: h \colon \set{X} \to \{\accept,\reject\} \st \P(h(X) = \accept) \geq \gamma \} 
    \end{align*}
    where $A_h = \{ x \in \set{X} : h(x) = \accept \}$ is the set of accepted instances of a selection function \mbox{$h \colon \set{X} \to \{\accept,\reject\}$}, and $A = \{ x \in\set{X} : h^{(j)}(x) = \accept  \}$.
    Therefore, Lemma~\ref{lemma:main} implies
    \begin{align*}
        |T_{i,j}-\widehat T_{i,j} | \leq C_T\left( R_{\gamma}^{(j)} + \epsilon_{\opt}(h^{(j)}) \right) + |\P(\widetilde Y = i \mid X \in A)-\widehat  T_{i,j}|        
    \end{align*}
    since $R_{\gamma}^{(j)}  = R_\set{F}^{(j)}$ and $\epsilon_{\opt}(h^{(j)}) = \sup\limits_{A_h\in \set{F}} \P(\widetilde Y = j \mid X \in A_h)  - \P(\widetilde Y = j \mid X \in A)$.

    Finally, the desired result follows since for each $i\in\set{Y}$
    \begin{align}
    |\P(\widetilde Y = i \mid h^{(j)}(X) = \accept) - \widehat  T_{i,j}(h^{(j)})| \leq \sqrt{\frac{2\log(1/\delta)}{\#_m(h^{(j)})}} \label{eq:chernoff}
    \end{align}
    with probability at least $1-\delta$ (over the draw of the samples in $S$).
    To obtain \eqref{eq:chernoff}, note that conditioned on the event that the number of accepted samples is $k=\#_m(h^{(j)})$, the random variable $k \widehat T_{i,j}(h^{(j)})$ follows a binomial distribution with parameters $k$ and probability \mbox{$\P(\widetilde Y = i \mid h^{(j)}(X) = \accept)$}.
    Then, the bound in \eqref{eq:chernoff} follows by the Chernoff bound conditionally on that event, and using the law of total probability varying the values of $k$.

    Finally, taking a union bound over the $|\set{Y}|$ classes we obtain
    \begin{align*}
    \max_{i\in\set{Y}}|\P(\widetilde Y = i \mid h^{(j)}(X) = \accept) - \widehat  T_{i,j}(h^{(j)})| \leq \sqrt{\frac{2\log(|\set{Y}|/\delta)}{\#_m(h^{(j)})}} 
    \end{align*}
    that holds with probability at least $1-\delta$. \qed

\subsection{Proof of Proposition~\ref{prop:anchors}}
\label{proof:prop:anchors}
    The result is proven using Lemma~\ref{lemma:main} with $\set{F}=\{ \{x\} : x\in\set{X}\}$ and $A = \{ x^{(j)}\}$.
    Therefore,
    \begin{align*}
        |T_{i,j}-\widehat T_{i,j}^{\anchor}(  x^{(j)} ) | \leq C_T\left( R_{0}^{(j)} + \epsilon_{\opt}^{\anchor}( x^{(j)})  \right) + |{{\eta}}^{\, \noisy}_i(x^{(j)}) -\widehat{{\eta}}^{\, \noisy}_i(x^{(j)})|        
    \end{align*}
    since $R_{0}^{(j)}  = R_\set{F}^{(j)}$ and $\epsilon_{\opt}^{\anchor}( x^{(j)})  = \sup\limits_{A'\in \set{F}} \P(\widetilde Y = j \mid X \in A')  - \P(\widetilde Y = j \mid X \in A)$. \qed

\subsection{Proof of Theorem~\ref{thm:threshold}}
\label{proof:thm:threshold}
    The result is proven using Lemma~\ref{lemma:main} with
    \begin{align*}
         \set{F}= \{A_h \subseteq \set{X}: h \colon \set{X} \to \{\accept,\reject\} \st \P(h(X) = \accept) \geq \gamma \} 
    \end{align*}
    where $A_h = \{ x \in \set{X} : h(x) = \accept \}$ is the set of accepted instances of a selection function \mbox{$h \colon \set{X} \to \{\accept,\reject\}$}, and $A = \{ x \in\set{X} : h^{(j)}_{\widehat \tau}(x) = \accept  \}$.
    Moreover, we have $A\in\set{F}$ since $\gamma = \P(h_{\widehat \tau}^{(j)}(X)=\accept)$ by hypothesis.
    Therefore, Lemma~\ref{lemma:main} implies
    \begin{align*}
        |T_{i,j}-\widehat T_{i,j} | \leq C_T\left( R_{\gamma}^{(j)} + \epsilon_{\opt}(h^{(j)}_{\widehat \tau}) \right) + |\P(\widetilde Y = i \mid X \in A)-\widehat  T_{i,j}|        
    \end{align*}
    since $R_{\gamma}^{(j)}  = R_\set{F}^{(j)}$ and $\epsilon_{\opt}(h^{(j)}_{\widehat \tau}) = \sup\limits_{A_h\in \set{F}} \P(\widetilde Y = j \mid X \in A_h)  - \P(\widetilde Y = j \mid X \in A)$.
    Moreover, since $\widehat T_{i,j}$ is the empirical version of $\P(\widetilde Y = i \mid h^{(j)}_{\widehat \tau}(X) = \accept)$, the uniform convergence of empirical conditional measures (Theorem~8 in \cite{balsubramani19}) implies
    \begin{align*}
    |\P(\widetilde Y = i \mid h^{(j)}_{\widehat \tau}(X) = \accept) - \widehat  T_{i,j}| \leq \sqrt{\frac{1000(\log(8 m_2)+\log(4/\delta))}{\#_{m_2}(h^{(j)}_{\widehat\tau})}}
    \end{align*}
    with probability at least $1-\delta$ (over the draw of the samples indexed by $S_2$).
    Taking a union bound over the $|\set{Y}|$ classes we obtain
    \begin{align*}
    \max_{i\in\set{Y}}|\P(\widetilde Y = i \mid h^{(j)}(X) = \accept) - \widehat  T_{i,j}(h^{(j)})| \leq \sqrt{\frac{1000(\log(8 m_2)+\log(4|\set{Y}|/\delta))}{\#_{m_2}(h^{(j)}_{\widehat\tau})}}
    \end{align*}
    with probability at least $1-\delta$ (over the draw of the samples indexed by $S_2$).

    Then, the result is obtained by bounding the suboptimality gap
    \begin{align*}
        \epsilon_{\opt}(h_{\widehat \tau}^{(j)}) =  \P(\widetilde Y \neq j \mid h^{(j)}_{\widehat \tau}(X) = \accept) - \min_h \{\P(\widetilde Y \neq j \mid h(X) = \accept) \st \P(h(X)=\accept)\geq\gamma\}.
    \end{align*}
    Note that $\widetilde R_\gamma^{(j)}$ is achieved by thresholding the noisy class-posterior at $\widehat\tau_\eta$.
    Therefore,
    \begin{align*}
    \epsilon_{\opt}(h_{\widehat \tau}^{(j)}) &= \P(\widetilde Y \neq j \mid s^{(j)}(X) \geq \widehat\tau) -\P(\widetilde Y \neq j \mid \eta_j^{\, \noisy}(X) \geq \widehat\tau_\eta)
    \\ &= \frac{1}{\gamma}( \P(\widetilde Y \neq j, s^{(j)}(X) \geq \widehat\tau) -  \P(\widetilde Y \neq j, \eta_j^{\, \noisy}(X) \geq \widehat\tau_\eta) ).
    \end{align*}
    where the last equality holds since $ \P(h_{\widehat \tau}^{(j)}(X) = \accept) = \gamma = \P(\eta_j^{\, \noisy}(X) \geq\widehat\tau_\eta)$.
    Finally, it is easy to check that
    \begin{align*}
        \P(\widetilde Y \neq j, s^{(j)}(X) \geq \widehat\tau) -  \P(\widetilde Y \neq j, \eta_j^{\, \noisy}(X) \geq \widehat\tau_\eta) &=  \P(\widetilde Y \neq j,  s^{(j)}(X) \geq \widehat\tau,  \eta_j^{\, \noisy}(X) < \widehat\tau_\eta) \\ 
        & \quad -   \P(\widetilde Y \neq j, s^{(j)}(X) <\widehat\tau,  \eta_j^{\, \noisy}(X) \geq \widehat\tau_\eta)\\
        & = \mathcal{E}_{\text{rank}}.
    \end{align*}
    \qed

\subsection{Proof of Theorem~\ref{thm:cost_dependent}}
\label{proof:thm:cost_dependent}
    The result is proven using Lemma~\ref{lemma:main}.
    Specifically, let
    \begin{align*}
         \set{F}= \{A_h \subseteq \set{X}: h \colon \set{X} \to \{\accept,\reject\} \st \P(h(X) = \accept) \geq \gamma \} 
    \end{align*}
    where $A_h = \{ x \in \set{X} : h(x) = \accept \}$ is the set of accepted instances of a selection function \mbox{$h \colon \set{X} \to \{\accept,\reject\}$}, and let $A = \{ x \in\set{X} : h^{(j)}_{\widehat c}(x) = \accept  \}$.
    Moreover, we have $A\in\set{F}$ since $\gamma = \P(h^{(j)}_{\widehat c}(X)=\accept)$ by hypothesis.
    Therefore, Lemma~\ref{lemma:main} implies
    \begin{align*}
        |T_{i,j}-\widehat T_{i,j} | \leq C_T\left( R_{\gamma}^{(j)} + \epsilon_{\opt}(h^{(j)}_{\widehat c}) \right) + |\P(\widetilde Y = i \mid X \in A)-\widehat  T_{i,j}|        
    \end{align*}
    since $R_{\gamma}^{(j)}  = R_\set{F}^{(j)}$ and $\epsilon_{\opt}(h^{(j)}_{\widehat c}) = \sup\limits_{A_h\in \set{F}} \P(\widetilde Y = j \mid X \in A_h)  - \P(\widetilde Y = j \mid X \in A)$.

    Moreover, by applying the Chernoff bound conditionally, for every $c$ in the grid
    \begin{align}
    \max_{i\in\set{Y}}|\P(\widetilde Y = i \mid h^{(j)}_c(X) = \accept) - \widehat  T_{i,j}(h^{(j)}_c;S_2)| \leq \sqrt{\frac{2\log(|\set{Y}|/\delta)}{\#_{m_2}(h^{(j)}_c)}} 
    \label{eq:chernoff2}
    \end{align}
    holds with probability at least $1-\delta$.
    To obtain \eqref{eq:chernoff2}, note that conditioned on the event that the number of accepted samples is $k=\#_{m_2}(h^{(j)}_c)$, the random variable $k \widehat T_{i,j}(h^{(j)}_c)$ follows a binomial distribution with parameters $k$ and probability $\P(\widetilde Y = i \mid h^{(j)}_c(X) = \accept)$.
    Then, the bound in \eqref{eq:chernoff2} follows by the Chernoff bound conditionally on that event, and using the law of total probability varying the values of $k$.

    Therefore, taking a union bound over the grid of size $\floor{1/\epsilon}$, we obtain for $c=\widehat c$
    \begin{align*}
    \max_{i\in\set{Y}} |\P(\widetilde Y = i \mid h^{(j)}_{\widehat c}(X) = \accept) - \widehat  T_{i,j}| \leq \sqrt{\frac{2\log(|\set{Y}|/(\delta\epsilon))}{\#_{m_2}( h_{\widehat c}^{(j)})}}
    \end{align*}
    with probability at least $1-\delta$ (over the draw of the samples indexed by $S_2$). 

    Then, the result is obtained by bounding the suboptimality gap
    \begin{align*}
        \epsilon_{\opt}(h_{\widehat c}^{(j)}) =  \P(\widetilde Y \neq j \mid h^{(j)}_{\widehat c}(X) = \accept) - \min_h \{\P(\widetilde Y \neq j \mid h(X) = \accept) \st \P(h(X)=\accept)\geq\gamma\}.
    \end{align*}
    Let $h_*^{(j)}$ be a solution of the one-sided selective classification problem in \eqref{eq:selective_classification} such that \mbox{$\P(h^{(j)}_*(X) = \accept)=\gamma$}.
    Then, 
    \begin{align*}
    \epsilon_{\opt}(h_{\widehat c}^{(j)}) &= \P(\widetilde Y \neq j \mid h^{(j)}_{\widehat c}(X) = \accept) -\P(\widetilde Y \neq j \mid h^{(j)}_*(X) = \accept)
    \\ &= \frac{1}{\gamma}( \P(\widetilde Y \neq j, h^{(j)}_{\widehat c}(X) = \accept) -  \P(\widetilde Y \neq j, h^{(j)}_*(X) = \accept) ).
    \end{align*}
    where the last equality holds since $\P(h_{\widehat c}^{(j)}(X) = \accept) = \gamma = \P(h^{(j)}_*(X) = \accept)$.
    Finally, a simple calculation yields
    \begin{align*}
         \P(\widetilde Y \neq j, & h^{(j)}_{\widehat c}(X) = \accept) -  \P(\widetilde Y \neq j, h^{(j)}_*(X) = \accept) \\ 
        & =\P(\widetilde Y \neq j, h^{(j)}_{\widehat c}(X) = \accept) -  \P(\widetilde Y \neq j, h^{(j)}_*(X) = \accept) \\
        & \quad  + \widehat c(\P(h^{(j)}_{\widehat c}(X) = \reject) - \P(h^{(j)}_*(X) = \reject)  ) \\
        & \quad - \widehat c(\P(h^{(j)}_{\widehat c}(X) = \reject) - \P(h^{(j)}_*(X) = \reject)  )\\
        &=\E[\ell_{\widehat c}^{(j)}(h^{(j)}_{\widehat c}(X),\widetilde Y)] - \E[\ell_{\widehat c}^{(j)}(h^{(j)}_*(X),\widetilde Y)]\\
        & \quad +\widehat c(\P(h^{(j)}_{\widehat c}(X) = \accept) - \P(h^{(j)}_*(X) = \accept)  )\\
        & \leq \E[\ell_{\widehat c}^{(j)}(h^{(j)}_{\widehat c}(X),\widetilde Y)] - \inf_h\E[\ell_{\widehat c}^{(j)}(h(X),\widetilde Y)]\\
        & = \mathcal{E}_{\text{cost}}
    \end{align*}
    where in the last inequality we have used that $\P(h_{\widehat c}^{(j)}(X) = \accept) = \gamma = \P(h^{(j)}_*(X) = \accept)$ and $\inf_h\E[\ell_{\widehat c}^{(j)}(h(X),\widetilde Y)]\leq \E[\ell_{\widehat c}^{(j)}(h^{(j)}_*(X),\widetilde Y)]$. \qed

\subsection{Proof of Theorem~\ref{thm:cost_dependent_surrogate}}
\label{proof:thm:cost_dependent_surrogate}
The result is a consequence of Theorem~\ref{thm:cost_dependent}.
Then, the result is obtained by establishing the bound
\begin{equation}
\label{eq:excess_risk}
\begin{split}
2\psi_\Phi\left( \frac{\E[\ell_{\widehat c}^{(j)}(h^{(j)}_{\widehat c}(X),\widetilde Y)] - \inf_h\E[\ell_{\widehat c}^{(j)}(h(X),\widetilde Y)]}{2}\right)&\leq \E[L_{\widehat c}^{(j)}( f_{\widehat c}^{(j)}(X),\widetilde Y)] \\ 
        & \quad - \inf_{f}\E[L_{\widehat c}^{(j)}( f(X),\widetilde Y)]
\end{split}
\end{equation}
    which we prove in what follows.
    
    Let $x\in\set{X}$ be arbitrary and for notational convenience let $\eta(x) = \P(\widetilde Y = j \mid X = x)$.
    We define the conditional $L_{\widehat c}^{(j)}$-risk as
    \begin{align*}
        C_{\Phi}(x;\alpha) \coloneqq \E[ L_{\widehat c}^{(j)}(\alpha,\widetilde Y) \mid X = x], \quad \alpha\in\R.
    \end{align*}
    A simple calculation produces
    \begin{equation}
    \label{eq:C_Phi}
        \begin{split}
            C_{\Phi}(x;\alpha) &= (1-\eta(x))\Phi(-\alpha) + \widehat c\Phi(\alpha)\\
        &= S(x)\left( (1-\eta'(x)) \Phi(-\alpha) + \eta'(x)\Phi(\alpha)    \right)
        \end{split}
    \end{equation}
    where $S(x) = 1-\eta(x)+\widehat c$ and $\eta'(x) = \widehat c/S(x)\in[0,1]$.
    Now define the conditional $\ell_{\widehat c}^{(j)}$-risk as
    \begin{align*}
        C(x;a) \coloneqq  \E[ \ell_{\widehat c}^{(j)}(a,\widetilde Y) \mid X = x], \quad a\in\{-1,+1\}
    \end{align*}
    where for notational convenience $\ell_{\widehat c}^{(j)}(a,\widetilde Y)$ is well defined for $a\in\{-1,1\}$ (we identify $\accept$ with $+1$ and $\reject$ with $-1$).
    By \cite[Sec.~2.2]{bartlett06} we have
    \begin{align}
        \inf_{f}\E[L_{\widehat c}^{(j)}( f(X),\widetilde Y)] = \E_X \left[\inf_{\alpha\in\R} C_{\Phi}(X;\alpha)\right], \label{eq:cond_risk1}\\
        \inf_h\E[\ell_{\widehat c}^{(j)}(h(X),\widetilde Y)] = \E_X \left[\inf_{a\in\{-1,+1\}} C(X;a)\right]. \label{eq:cond_risk2}
    \end{align}
    Define the excess conditional risks
    \begin{align*}
        \Delta C_\Phi(x; \alpha) \coloneqq C_{\Phi}(x;\alpha) - \inf_{\alpha\in\R} C_{\Phi}(x;\alpha),\\
        \Delta C(x; a) \coloneqq C(x;a) - \inf_{a\in\{-1,+1\}} C(x;a).
    \end{align*}
    A simple calculation yields
    \begin{align}
        \Delta C( x;a) = |\eta(x) - (1-{\widehat c})| \I\{ a \neq \sign(\eta(x) -(1-{\widehat c}))\}.
        \label{eq:property1}
    \end{align}

    Now we proceed with the proof of \eqref{eq:excess_risk} for an arbitrary score function $f^{(j)}$ and its corresponding binary classifier $h^{(j)}$, i.e., $h^{(j)}(x) = \sign(f^{(j)}(x))$.

    Case 1: $\sign(2\eta'(x)-1) = \sign(f^{(j)}(x))$.
    We have \mbox{$\sign(2\eta'(x)-1) = \sign(\eta(x)-(1-\widehat c))$}, and thus by \eqref{eq:property1} we get
    \begin{align*}
        \Delta C( x;h^{(j)}(x)) = |\eta(x) - (1-\widehat c)| \I\{ \sign(f^{(j)}(x)) \neq \sign(\eta(x) -(1-\widehat c)) \}= 0.
    \end{align*}
    Since the $\psi_\Phi$-transform satisfies $\psi_\Phi(0) = 0$ as shown in \cite[Lemma~2]{bartlett06}, the bound
    \begin{align*}
        \Delta C_\Phi( x;f^{(j)}(x)) \geq 0 = 2  \psi_\Phi\left( \frac{\Delta C(x;h^{(j)}(x))}{2}  \right)
    \end{align*}
    trivially holds.

    Case 2: $\sign(2\eta'(x)-1) \neq \sign(f^{(j)}(x))$.
    For $t\in[0,1]$ we define
    \begin{align*}
        H(t) &= \inf_{\alpha\in\R} \left( (1-t) \Phi(-\alpha) + t\Phi(\alpha)    \right),\\
        H^{-}(t) &= \inf_{\alpha\st \alpha(2t-1)\leq0} \left( (1-t) \Phi(-\alpha) + t\Phi(\alpha)    \right)
    \end{align*}
    following \cite[Def.~1]{bartlett06}.
    Then, since $\sign(2\eta'(x)-1) \neq \sign(f^{(j)}(x))$, we have
    \begin{align*}
        \Delta C_\Phi( x;f^{(j)}(x)) &\geq S(x)(H^{-}(\eta'(x)) - H(\eta'(x))) && \text{(By \eqref{eq:C_Phi})}\\
        & = S(x) \psi_\Phi(2\eta'(x)-1) && \text{(definition of $\psi$, \cite[Def.~2]{bartlett06})}\\
        &=  S(x) \psi_\Phi(|2\eta'(x)-1|) && \text{(symmetry of $\psi$, \cite[Lemma~2]{bartlett06})}\\
        &= S(x) \psi_\Phi\left( \frac{|\eta(x)-(1-\widehat c)|}{S(x)} \right) && \text{($|2\eta'(x)-1| =  \frac{|\eta(x)-(1-\widehat c)|}{S(x)}$)}\\
        &= S(x) \psi_\Phi\left( \frac{\Delta C(x;h^{(j)}(x))}{S(x)} \right) && \text{($|\eta(x)-(1-\widehat c)| =  \Delta C(x;h^{(j)}(x))$)}\\
        & =2 \frac{S(x)}{2} \psi_\Phi\left( \frac{\Delta C(x;h^{(j)}(x))}{S(x)} \right)\\
        & \geq 2 \psi_\Phi\left( \frac{\Delta C(x;h^{(j)}(x))}{S(x)} \frac{S(x)}{2} \right) && \text{($S(x)\leq 2$ and $\psi_\Phi$ is convex)} \\
        & = 2  \psi_\Phi\left( \frac{\Delta C(x;h^{(j)}(x))}{2}  \right).
    \end{align*}

    Therefore, we have shown that for every $x\in \set{X}$,
    \begin{align*}
        \Delta C_\Phi( x;f^{(j)}(x))\geq 2  \psi_\Phi\left( \frac{\Delta C(x;h^{(j)}(x))}{2}  \right).
    \end{align*}
    Finally, since $\psi_\Phi$ is convex, by Jensen's inequality we obtain
    \begin{align*}
        \E_X[\Delta C_\Phi(X;f^{(j)}(X))] \geq \E_X\left[ 2  \psi_\Phi\left( \frac{\Delta C(X;h^{(j)}(X))}{2}  \right) \right] \geq 2 \psi_\Phi\left( \frac{\E_X[\Delta C(X;h^{(j)}(X))]}{2} \right)
    \end{align*}
    which proves \eqref{eq:excess_risk} because
    \begin{align*}
        \E_X[\Delta C_\Phi(X;f^{(j)}(X))]  = \E[L_{\widehat c}^{(j)}( f^{(j)}(X),\widetilde Y)] - \inf_{f}\E[L_{\widehat c}^{(j)}( f(X),\widetilde Y)]
    \end{align*}
    and
    \begin{align*}
        \E_X[\Delta C(X;h^{(j)}(X))] = \E[\ell_{\widehat c}^{(j)}(h^{(j)}(X),\widetilde Y)] - \inf_h\E[\ell_{\widehat c}^{(j)}(h(X),\widetilde Y)]
    \end{align*}
    hold by \eqref{eq:cond_risk1} and \eqref{eq:cond_risk2}, respectively.
    \qed

\section{Analysis of the effect of the number of accepted samples}
\label{app:algorithms}
In this appendix, we provide an analysis on the effect of the hyperparameter $N_+$ for \cref{alg:threshold,alg:cost_dependent}.
In practice, the optimal choice of the lower bound for the number of accepted samples $N_+$ must balance the need for a sufficiently large number of accepted instances (to reduce the variance of the estimator) with the requirement of maintaining a low false discovery rate (to minimize the bias).

For Algorithm~\ref{alg:threshold}, we analyze the impact of $N_+$ by observing the empirical objective function $\widehat T_{j,j}(h^{(j)}_{\tau_k};S_2)$ as a function of the number of accepted samples $k\in\{1,2,\ldots,m_2\}$.

When plotting $(k,\widehat T_{j,j}(h^{(j)}_{\tau_k};S_2))$, we typically observe three distinct regimes.
\begin{enumerate}
    \item When the number of accepted samples is very small, the empirical probability $\widehat T_{j,j}(h^{(j)}_{\tau_k};S_2)$ fluctuates due to high variance.

    \item As the number of accepted samples increases, the variance decreases, and $\widehat T_{j,j}(h^{(j)}_{\tau_k};S_2)$ stabilizes at a constant value which is close to the true transition probability $T_{j,j}$.
    Here, the corresponding selection functions accept enough instances to provide a low-variance estimate without accepting samples belonging to other true classes $y\neq j$, thereby keeping the false discovery rate small.
    
    \item If the number of accepted samples increases beyond the plateau, the corresponding selection functions are forced to accept samples with labels $y\neq j$. 
    As a consequence, the estimate $\widehat T_{j,j}(h^{(j)}_{\tau_k};S_2)$ drops sharply from $T_{j,j}$.
\end{enumerate}

Choosing the maximum number of accepted samples inside the plateau guarantees the lowest possible variance for the estimator while ensuring that the bias remains controlled. 

We provide experimental results on the CIFAR-10 dataset under different noise scenarios that illustrate the three regimes (see \cref{fig:high_noise}), and we provide further implementation details in Appendix~\ref{app:results}.

\begin{figure*}[htb!]
\centering
\subfigure[CIFAR-10 under uniform noise with $T_{j,j}=0.5$.]{

\label{fig:n_plus_thr_flip_45}

\psfrag{1}[cc][rc]{\scalebox{0.5}{1}}
\psfrag{0.9}[cc][cc]{\scalebox{0.5}{0.9}}
\psfrag{0.8}[cc][cc]{\scalebox{0.5}{0.8}}
\psfrag{0.7}[cc][cc]{\scalebox{0.5}{0.7}}
\psfrag{0.6}[cc][cc]{\scalebox{0.5}{0.6}}
\psfrag{0.5}[cc][cc]{\scalebox{0.5}{0.5}}
\psfrag{0.4}[cc][cc]{\scalebox{0.5}{0.4}}
\psfrag{0.3}[cc][cc]{\scalebox{0.5}{0.3}}
\psfrag{0.2}[cc][cc]{\scalebox{0.5}{0.2}}
\psfrag{0.1}[cc][cc]{\scalebox{0.5}{0.1}}
\psfrag{0}[cc][rc]{\scalebox{0.5}{0}}

\psfrag{10}[cc][rc]{\scalebox{0.6}{$10$}}
\psfrag{1}[cc][rc]{\scalebox{0.5}{\, 1}}
\psfrag{2}[cc][rc]{\scalebox{0.5}{\, 2}}
\psfrag{3}[cc][rc]{\scalebox{0.5}{\, 3}}
\psfrag{4}[cc][rc]{\scalebox{0.5}{\, 4}}

\psfrag{y}[bc][bc]{\scalebox{0.7}{$\widehat T_{j,j}(h^{(j)}_{\tau_k};S_2)$}}
\psfrag{x}[cc][cc]{\scalebox{0.7}{Number of accepted samples $k$}}

\includegraphics[width=0.48\linewidth]{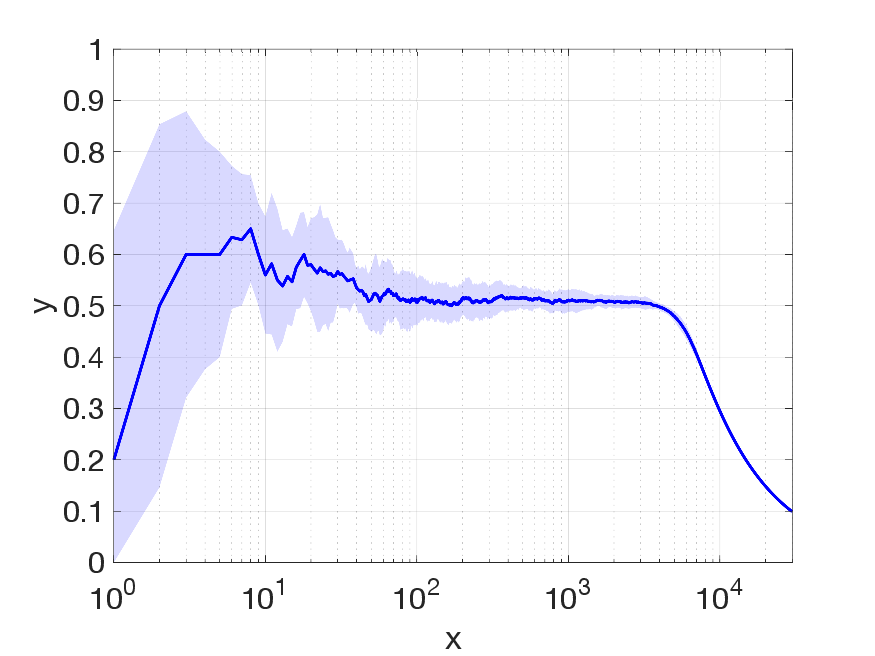}

}%
\subfigure[CIFAR-10 under flip noise with $T_{j,j}=0.55$.]{

\label{fig:n_plus_thr_uniform_50}

\psfrag{1}[cc][rc]{\scalebox{0.5}{1}}
\psfrag{0.9}[cc][cc]{\scalebox{0.5}{0.9}}
\psfrag{0.8}[cc][cc]{\scalebox{0.5}{0.8}}
\psfrag{0.7}[cc][cc]{\scalebox{0.5}{0.7}}
\psfrag{0.6}[cc][cc]{\scalebox{0.5}{0.6}}
\psfrag{0.5}[cc][cc]{\scalebox{0.5}{0.5}}
\psfrag{0.4}[cc][cc]{\scalebox{0.5}{0.4}}
\psfrag{0.3}[cc][cc]{\scalebox{0.5}{0.3}}
\psfrag{0.2}[cc][cc]{\scalebox{0.5}{0.2}}
\psfrag{0.1}[cc][cc]{\scalebox{0.5}{0.1}}
\psfrag{0}[cc][rc]{\scalebox{0.5}{0}}

\psfrag{10}[cc][rc]{\scalebox{0.6}{$10$}}
\psfrag{1}[cc][rc]{\scalebox{0.5}{\, 1}}
\psfrag{2}[cc][rc]{\scalebox{0.5}{\, 2}}
\psfrag{3}[cc][rc]{\scalebox{0.5}{\, 3}}
\psfrag{4}[cc][rc]{\scalebox{0.5}{\, 4}}

\psfrag{y}[bc][bc]{\scalebox{0.7}{$\widehat T_{j,j}(h^{(j)}_{\tau_k};S_2)$}}
\psfrag{x}[cc][cc]{\scalebox{0.7}{Number of accepted samples $k$}}

  \includegraphics[width=0.48\linewidth]{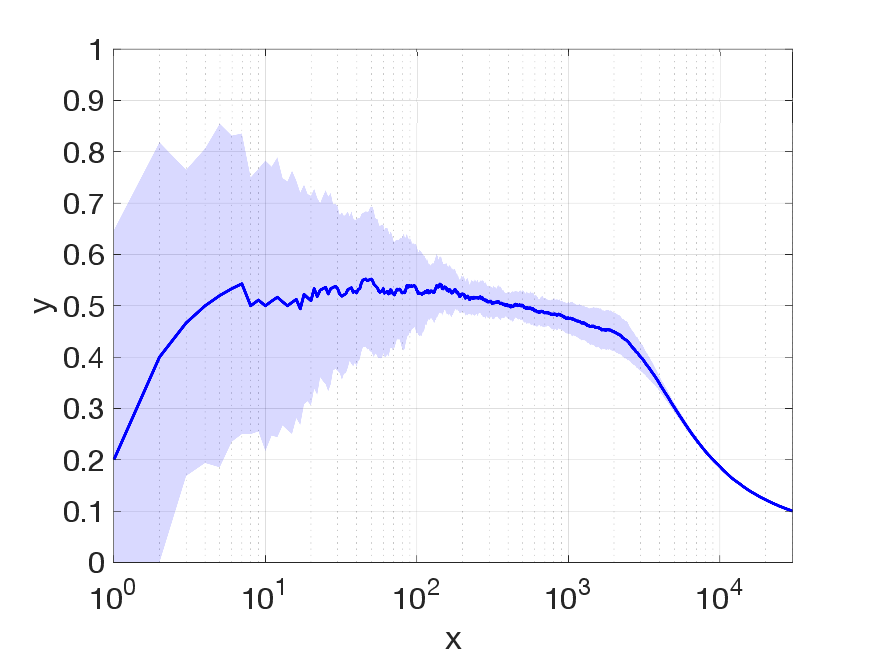}

}%
\caption{The three regimes described above appear on CIFAR-10. Moreover, the value of the plateau is close to $T_{j,j}$.}
\label{fig:high_noise}
\end{figure*}

In Figure~\ref{fig:comparison_N} we show that when the number of instances is small, the size of the plateau is smaller and there is more variability, leading to higher estimation errors.

\begin{figure*}[htb!]
\centering
\subfigure[CIFAR-10 using 10,000 samples.]{

\label{fig:n_plus_thr_uniform_20_10000}

\psfrag{1}[cc][rc]{\scalebox{0.5}{1}}
\psfrag{0.9}[cc][cc]{\scalebox{0.5}{0.9}}
\psfrag{0.8}[cc][cc]{\scalebox{0.5}{0.8}}
\psfrag{0.7}[cc][cc]{\scalebox{0.5}{0.7}}
\psfrag{0.6}[cc][cc]{\scalebox{0.5}{0.6}}
\psfrag{0.5}[cc][cc]{\scalebox{0.5}{0.5}}
\psfrag{0.4}[cc][cc]{\scalebox{0.5}{0.4}}
\psfrag{0.3}[cc][cc]{\scalebox{0.5}{0.3}}
\psfrag{0.2}[cc][cc]{\scalebox{0.5}{0.2}}
\psfrag{0.1}[cc][cc]{\scalebox{0.5}{0.1}}
\psfrag{0}[cc][rc]{\scalebox{0.5}{0}}

\psfrag{10}[cc][rc]{\scalebox{0.6}{$10$}}
\psfrag{1}[cc][rc]{\scalebox{0.5}{\, 1}}
\psfrag{2}[cc][rc]{\scalebox{0.5}{\, 2}}
\psfrag{3}[cc][rc]{\scalebox{0.5}{\, 3}}
\psfrag{4}[cc][rc]{\scalebox{0.5}{\, 4}}

\psfrag{y}[bc][bc]{\scalebox{0.7}{$\widehat T_{j,j}(h^{(j)}_{\tau_k};S_2)$}}
\psfrag{x}[cc][cc]{\scalebox{0.7}{Number of accepted samples $k$}}

\includegraphics[width=0.48\linewidth]{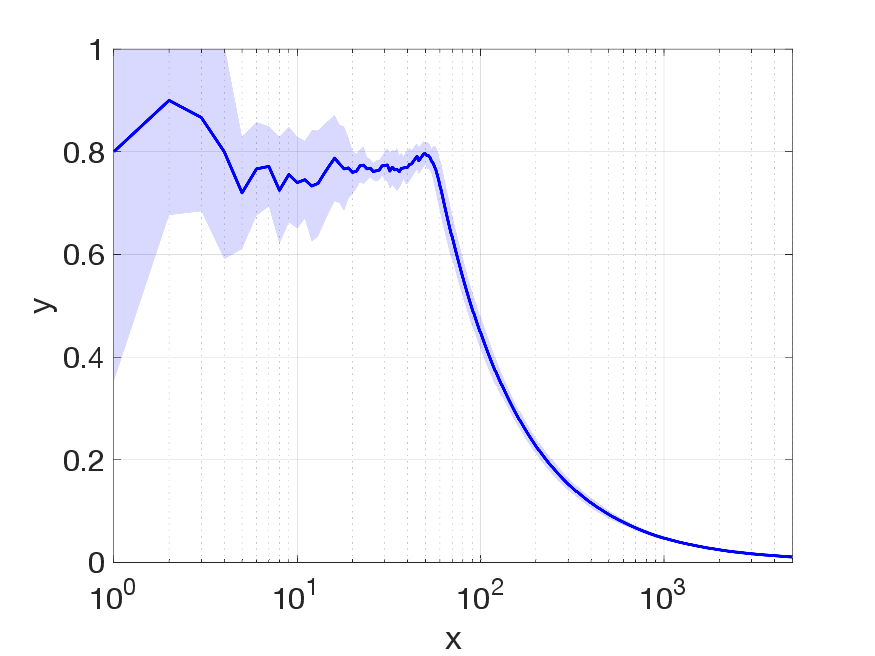}

}%
\subfigure[CIFAR-10 using 60,000 samples.]{

\label{fig:n_plus_thr_uniform_20}

\psfrag{1}[cc][rc]{\scalebox{0.5}{1}}
\psfrag{0.9}[cc][cc]{\scalebox{0.5}{0.9}}
\psfrag{0.8}[cc][cc]{\scalebox{0.5}{0.8}}
\psfrag{0.7}[cc][cc]{\scalebox{0.5}{0.7}}
\psfrag{0.6}[cc][cc]{\scalebox{0.5}{0.6}}
\psfrag{0.5}[cc][cc]{\scalebox{0.5}{0.5}}
\psfrag{0.4}[cc][cc]{\scalebox{0.5}{0.4}}
\psfrag{0.3}[cc][cc]{\scalebox{0.5}{0.3}}
\psfrag{0.2}[cc][cc]{\scalebox{0.5}{0.2}}
\psfrag{0.1}[cc][cc]{\scalebox{0.5}{0.1}}
\psfrag{0}[cc][rc]{\scalebox{0.5}{0}}

\psfrag{10}[cc][rc]{\scalebox{0.6}{$10$}}
\psfrag{1}[cc][rc]{\scalebox{0.5}{\, 1}}
\psfrag{2}[cc][rc]{\scalebox{0.5}{\, 2}}
\psfrag{3}[cc][rc]{\scalebox{0.5}{\, 3}}
\psfrag{4}[cc][rc]{\scalebox{0.5}{\, 4}}

\psfrag{y}[bc][bc]{\scalebox{0.7}{$\widehat T_{j,j}(h^{(j)}_{\tau_k};S_2)$}}
\psfrag{x}[cc][cc]{\scalebox{0.7}{Number of accepted samples $k$}}

  \includegraphics[width=0.48\linewidth]{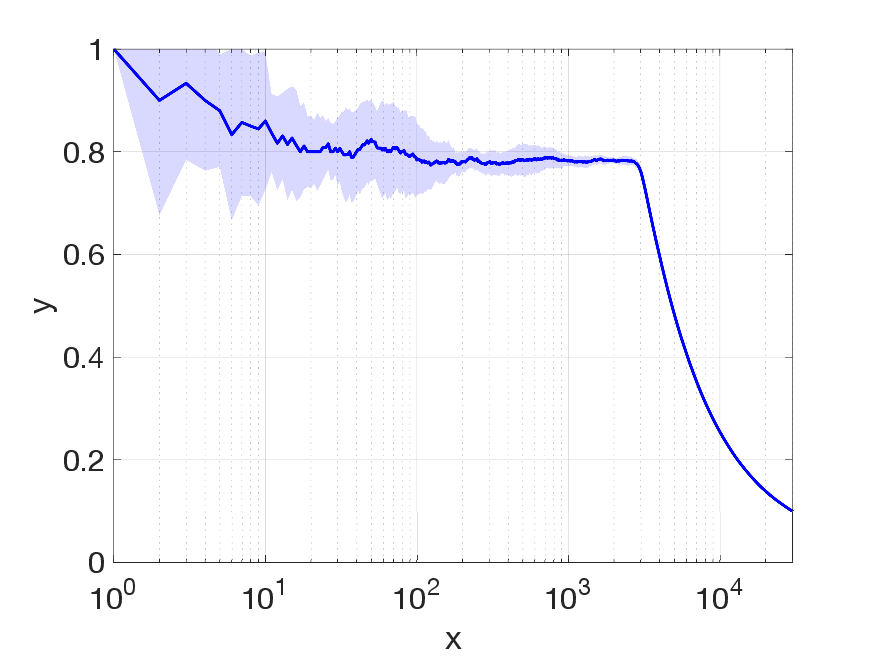}

}%
\caption{The value of the plateau is close to $T_{j,j}=0.8$ and the noise is uniform. Moreover, with fewer samples the plateau is shorter and noisier.}
\label{fig:comparison_N}
\end{figure*}

For Algorithm~\ref{alg:cost_dependent}, we perform a similar analysis by plotting, for $c\in\{0.05,0.1,\ldots,0.95\}$, $(c,\widehat T_{j,j}( h_{c}^{(j)};S_2))$ (blue line) and the number of accepted samples is plotted $(c,\#_{m_2}( h_c^{(j)} ))$ (red line) since the number of accepted samples $\#_{m_2}( h_c^{(j)} )$ typically increases monotonically with $c$.
In Figure~\ref{fig:N_plus_cost} we plot the corresponding graphs for the Satellite dataset.

\begin{figure}[H]

\psfrag{1}[cc][rc]{\scalebox{0.5}{ $ $}}
\psfrag{0.9}[cc][cc]{\scalebox{0.5}{0.9}}
\psfrag{0.8}[cc][cc]{\scalebox{0.5}{0.8}}
\psfrag{0.7}[cc][cc]{\scalebox{0.5}{0.7}}
\psfrag{0.6}[cc][cc]{\scalebox{0.5}{0.6}}
\psfrag{0.5}[cc][cc]{\scalebox{0.5}{0.5}}
\psfrag{0.4}[cc][cc]{\scalebox{0.5}{0.4}}
\psfrag{0.3}[cc][cc]{\scalebox{0.5}{0.3}}
\psfrag{0.2}[cc][cc]{\scalebox{0.5}{0.2}}
\psfrag{0.1}[cc][cc]{\scalebox{0.5}{0.1}}
\psfrag{0}[cc][cc]{\scalebox{0.5}{ 0}}

\psfrag{1}[cc][rc]{\scalebox{0.5}{1}}
\psfrag{2}[cc][rc]{\scalebox{0.5}{2}}
\psfrag{3}[cc][rc]{\scalebox{0.5}{3}}
\psfrag{4}[cc][rc]{\scalebox{0.5}{4}}

\psfrag{500}[cc][cc]{\scalebox{0.5}{500}}
\psfrag{1000}[cc][cc]{\scalebox{0.5}{1000}}
\psfrag{1500}[cc][cc]{\scalebox{0.5}{1500}}
\psfrag{2000}[cc][cc]{\scalebox{0.5}{2000}}
\psfrag{2500}[cc][cc]{\scalebox{0.5}{2500}}
\psfrag{3000}[cc][cc]{\scalebox{0.5}{3000}}

\psfrag{y1}[bc][bc]{\scalebox{0.7}{$\widehat T_{j,j}(h^{(j)}_{c};S_2)$}}
\psfrag{y2}[cc][cc]{\scalebox{0.7}{$\#_{m_2}( h_c^{(j)} )$}}

\psfrag{c}[cc][cc]{\scalebox{0.7}{$c$}}

\centering
  \includegraphics[width=0.48\linewidth]{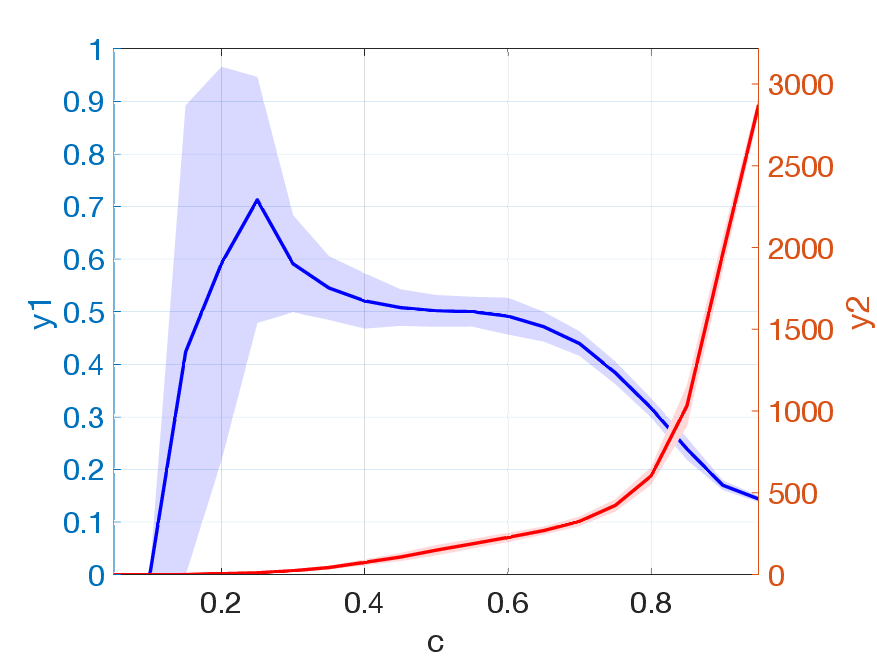}

    \caption{The value of the plateau is close to $T_{j,j}=0.5$ and the noise is uniform.}

  \label{fig:N_plus_cost}
\end{figure}
\vspace{-1.5em}
Again, choosing the highest number of accepted samples inside the plateau ensures a low variance for the estimator while guaranteeing that the bias is small.

The existence and variability of the plateau is determined by the quality of the learned score functions or the binary classifiers in \cref{alg:threshold,alg:cost_dependent}, respectively.

\section{Implementation details and additional experimental results}
\label{app:results}
In the following we provide further implementation details and describe the datasets used in the experimental results. 
Then, we complement the results in the main paper by including a table that compares the performance of the algorithms presented in the paper with the anchor-based method.

\subsection{Implementation details}
\paragraph{Datasets} We evaluate our methodology on four publicly available benchmark datasets from the UCI repository \cite{uci} and Kaggle website \url{https://www.kaggle.com}.
The main characteristics of the datasets used are summarized below.
\begin{itemize}
\item \textbf{Letter:} 20,000 samples, 26 classes, 16 features.
\item \textbf{Satellite:} 6,435 samples, 6 classes, 36 features.
\item \textbf{MNIST:} 70,000 samples, 10 classes, 784 features \cite{mnist}.
\item \textbf{CIFAR-10:} 60,000 samples, 10 classes, 3,072 features \cite{cifar10}.
\end{itemize}

\paragraph{Learned binary classifiers} For the tabular datasets (Letter and Satellite), we employ Random Forests with 200 maximum splits, and a minimum leaf size of 5.
For MNIST, we utilize a multilayer perceptron with two hidden layers of sizes 128 and 64, trained with a regularization parameter of 0.001. 
For CIFAR-10, we perform logistic regression using features extracted from the final layer of a pre-trained 
ConvNeXt based vision foundation model \cite{simeoni2025dinov3}.

\paragraph{Label noise} We generate the noise synthetically, following standard practices in the literature \cite{li21,yong23}. Specifically, for a noise ratio $p \in (0,1)$, we consider two deterministic noise regimes.
\begin{itemize}
    \item $p$-uniform noise: The transition matrix $\T$ is defined such that for every column $j \in \mathcal{Y}$
    \begin{align*}
    T_{j,j} = 1-p, \; T_{i,j} = \frac{p}{|\set{Y}|-1}, \quad \forall i \neq j.
\end{align*}
    \item $p$-flip noise: $T_{1,1}=1-p$, $T_{2,1}=p$ and for every $j\in\{2,3,\ldots,|\set{Y}|\}$,
    \begin{align*}
    T_{j,j} = 1-p, \; T_{j-1,j} = p.
    \end{align*}
\end{itemize}
For example, for \cref{fig:letter} we consider 0.5-uniform noise
and for \cref{fig:MNIST} 0.2-label flip.

\paragraph{Implementation of the anchor-based method} We implement the anchor-based method described in Section~\ref{sec:related_work} by utilizing the 97th percentile of the noisy class-posterior rather than the maximum value, as suggested in the original work \cite{patrini17}. 

\paragraph{Implementation of the proposed algorithms} For \cref{alg:threshold,alg:cost_dependent} we split the samples into two sets of equal size ($m_1 = m_2 = n/2$).

\paragraph{Implementation of the oracle} The oracle for the plots in Section~\ref{sec:experiments} is implemented as follows. First, a feature representation $\varphi(x)\in\R^D$ is obtained from a network trained on clean labels, with a bias coordinate appended to the representation.
Specifically, for the Letter dataset, we use the activations of the final hidden layer of the clean-data MLP (with two hidden layers of sizes 128 and 64), while for MNIST we use the pretrained ConvNeXt representations \cite{simeoni2025dinov3}. 
Given these features, we parametrize the clean class-posterior as $\widehat\eta(x;\mu)=\operatorname{softmax}(\varphi(x)^\up{T} \mu)$ where $\mu\in\R^D$.
Then, we jointly estimate the classifier parameters and transition matrix by minimizing the negative log-likelihood
\begin{align*}
\min_{\mu,\widehat \T}
-\sum_{k=1}^n
\log
\left[
\sum_{j\in\set{Y}}
\widehat T_{\widetilde y_k,j}
\widehat\eta_j(x_k;\mu)
\right]
\end{align*}
subject to $\widehat\T$ being column-stochastic. 
The optimization is initialized with $\mu$ equal to the all-ones vector and $\widehat \T$ at the ground-truth transition matrix $\T$.

\paragraph{Number of repetitions} The results presented at Section~\ref{sec:experiments} (Figure~\ref{fig:num_result}) are averaged over 50 independent trials for each sample size $n$. 
In Appendix~\ref{app:algorithms}, we report the mean over 100 repetitions, where the shaded regions denote the standard deviation. 
Finally, for the results in Appendix~\ref{app:tables}, we report the mean and standard deviation using the full datasets over 5 repetitions for \cref{tab:uniform_0.2,tab:flip_0.45}, whereas for \cref{tab:random} we consider 20 repetitions.

\subsection{Additional experimental results}
\label{app:tables}

\paragraph{Transition matrix estimation error}

Tables \ref{tab:uniform_0.2} and \ref{tab:flip_0.45} report the Mean Absolute Error (MAE) of the estimated transition matrices under \mbox{0.2-uniform} noise and 0.45-flip noise, respectively. 
In \cref{tab:random}, we introduce noise in a more realistic and challenging way.
Specifically, we consider asymmetric label-noise transition matrices with column-wise uniform noise where the noise parameter of each column follows a different value randomly sampled from an $\text{Unif}(0,1/2)$ distribution.

We compare our proposed algorithms (Algorithm~\ref{alg:threshold} and Algorithm~\ref{alg:cost_dependent}) against the standard anchor-based method across multiple datasets. 
To evaluate how the choice of the underlying model impacts estimation, we employ both highly expressive classifiers (e.g., Neural Networks and Random Forests) and weaker, linear methods (Logistic Regression). 
For the high-dimensional CIFAR-10 dataset, we utilize a strong Neural Network architecture.

\begin{table}[H]
\tiny
\caption{\small MAE of different estimators under 0.2-uniform noise. \label{tab:uniform_0.2}}
\begin{tabular}{l c cc cc cc cc cc}
\toprule
  & \multirow{2}{*}{CIFAR-10} & \multicolumn{2}{c}{MNIST} &\multicolumn{2}{c}{Letter} & \multicolumn{2}{c}{Satellite}  \\
 &     & NN & Logistic reg. & RF & Logistic reg. & RF & Logistic reg. \\
\midrule
Anchor-based  & $.009 \pm .001$ &
$.010 \pm .001$ & 
$.016 \pm .001$ & 
$.044 \pm .000$ & 
$.047 \pm .000$ & 
$.022 \pm .001$ & 
$\mathbf{.067 \pm .023}$ \\
Algorithm~\ref{alg:threshold}  & 
$.008 \pm .001$ &
$.008 \pm .001$ &
$\mathbf{.007 \pm .000}$ & 
$\mathbf{.005 \pm .000}$ & 
$\mathbf{.021 \pm .001}$ & 
$\mathbf{.019 \pm .002}$ & 
$.068 \pm .010$ \\
Algorithm~\ref{alg:cost_dependent} &
$\mathbf{.006 \pm .001}$ & 
$\mathbf{.004 \pm .000}$ & 
$.012 \pm .001$ & 
$.006 \pm .000$ & 
$.029 \pm .001$ & 
$.021 \pm .004$ & 
$.085 \pm .003$ \\
\bottomrule
\end{tabular}
\end{table}

\begin{table}[H]
\tiny
\caption{\small MAE of different estimators under 0.45-flip noise. \label{tab:flip_0.45}}
\begin{tabular}{l c cc cc cc cc cc}
\toprule
 & \multirow{2}{*}{CIFAR-10} & \multicolumn{2}{c}{MNIST} & \multicolumn{2}{c}{Letter} & \multicolumn{2}{c}{Satellite}  \\
 &     & NN & Logistic reg. & RF & Logistic reg. & RF & Logistic reg. \\
\midrule
Anchor-based  & 
$.029 \pm .011$ & 
$.016 \pm .009$ & 
$\mathbf{.023 \pm .003}$ & 
$.046 \pm .001$ & 
$.048 \pm .002$ & 
$\mathbf{.039 \pm .011}$ & 
$.085 \pm .056$ \\
Algorithm~\ref{alg:threshold} & 
$.036 \pm .004$ & 
$.033 \pm .003$ & 
$.033 \pm .004$ & 
$\mathbf{.014 \pm .001}$ & 
$\mathbf{.031 \pm .000}$ & 
$.044 \pm .005$ & 
$\mathbf{.066 \pm .003}$ \\
Algorithm~\ref{alg:cost_dependent} & 
$\mathbf{.015 \pm .005}$ & 
$\mathbf{.008 \pm .003}$ & 
$.037 \pm .006$ & 
$\mathbf{.014 \pm .001}$ & 
$.039 \pm .001$ & 
$.045 \pm .004$ & 
$.083 \pm .003$ \\
\bottomrule
\end{tabular}
\end{table}

\begin{table}[H]
\tiny
\caption{\small MAE of different estimators under random asymmetric uniform noise.\label{tab:random}}

\begin{tabular}{l c cc cc cc cc cc}
\toprule
 & \multirow{2}{*}{CIFAR-10} & \multicolumn{2}{c}{MNIST} & \multicolumn{2}{c}{Letter} & \multicolumn{2}{c}{Satellite}  \\
 &     & NN & Logistic reg. & RF & Logistic reg. & RF & Logistic reg. \\
\midrule
Anchor-based  & 
$.009 \pm .001$ & 
$.011 \pm .001$ & 
$.017 \pm .002$ & 
$.041 \pm .001$ & 
$.045 \pm .001$ & 
$.021 \pm .003$ & 
$.102 \pm .065$ \\
Algorithm~\ref{alg:threshold} & 
$.009 \pm .003$ & 
$.008 \pm .001$ & 
$\mathbf{.008 \pm .001}$& 
$\mathbf{.006 \pm .001}$ & 
$\mathbf{.021 \pm .001}$ & 
$\mathbf{.020 \pm .002}$ & 
$\mathbf{.068 \pm .010}$ \\
Algorithm~\ref{alg:cost_dependent} & 
$\mathbf{.006 \pm .001}$ & 
$\mathbf{.005 \pm .001}$ & 
$.011 \pm .003$ & 
$.007 \pm .000$ & 
$.029 \pm .001$ & 
$.021 \pm .003$ & 
$.089 \pm .011$ \\
\bottomrule
\end{tabular}

\end{table}

\vspace{-1em}
As observed across the three noise regimes, our proposed methods outperform the anchor-based baseline on almost all datasets.
As suggested by theoretical results, the anchor-based estimator suffers fundamentally from the difficulty of accurately estimating the class-posterior pointwise.
In contrast, our results empirically validate that accurate transition matrix estimation does not require full pointwise class-posterior recovery. 
Instead, by framing the problem through one-sided selective classification, our methodology only requires learning a sufficiently capable binary classifier.

Furthermore, comparing Table~\ref{tab:uniform_0.2} and Table~\ref{tab:flip_0.45} illustrates the practical impact of the theoretical constant $C_T$ derived in our finite-sample bounds. 
The 0.45-flip noise setting is significantly more challenging than the 0.2-uniform noise setting, which translates to a larger $C_T$.
In accordance with our theory, this increased difficulty causes all evaluated methods to achieve a higher absolute error under flip noise. 
Nonetheless, our methods, particularly Algorithm~\ref{alg:cost_dependent} when implemented with strong classifiers like NNs on CIFAR-10 and MNIST, demonstrate good performance despite the harsh noise setting.

\cref{tab:random} shows that under a more challenging and realistic setting, the different methods present a similar behavior to the scenario from \cref{tab:uniform_0.2}. 
Furthermore, in this more demanding setting, on every dataset, at least one of the proposed algorithms outperforms the anchor-based baseline.

\paragraph{Effect of the dimensionality}
In \cref{fig:dimension}, we show how the error of the different algorithms increases with the number of dimensions $d$.
Specifically, we apply PCA to the ConvNeXt features of CIFAR-10, reducing to dimension $d\in\{100,250,500,1000\}$, and train a logistic regression classifier on the reduced features under uniform 0.2-noise, with the number of training samples fixed at $n=5000$.

The error of the anchor-based method grows with $d$, consistent with the theoretical discussion in the paper. 
In contrast, the error of \cref{alg:threshold,alg:cost_dependent} remain essentially flat across the tested range of $d$, in line with Theorem~\ref{thm:performance_guarantees}'s error bound, which has no explicit dependence on $d$.

\begin{figure*}[htp]

\centering

\psfrag{0.035}[cc][cc]{\scalebox{0.6}{$0.035$\;}}
\psfrag{0.025}[cc][cc]{\scalebox{0.6}{$0.025$\;}}

\psfrag{0.03}[cc][cc]{\scalebox{0.6}{$0.03$\;}}

\psfrag{0.02}[cc][cc]{\scalebox{0.6}{$0.02$\;}}

\psfrag{0.015}[cc][cc]{\scalebox{0.6}{$0.015$\;}}
\psfrag{0.005}[cc][cc]{\scalebox{0.6}{$0.005$\;}}
\psfrag{0.003}[cc][cc]{\scalebox{0.6}{$0.003$ \; }}

\psfrag{10}[cc][rc]{\scalebox{0.48}{$10$}}
\psfrag{1}[cc][rc]{\scalebox{0.4}{1}}
\psfrag{2}[cc][rc]{\scalebox{0.4}{2}}
\psfrag{3}[cc][rc]{\scalebox{0.4}{3}}
\psfrag{4}[cc][rc]{\scalebox{0.4}{4}}

\psfrag{Anchorbased}[cc][cc]{\scalebox{0.65}{Anchor-based}}

\psfrag{Algorithm11}[cc][cc]{\scalebox{0.65}{Algorithm~\ref{alg:threshold}}}

\psfrag{Algorithm22}[cc][cc]{\scalebox{0.65}{Algorithm~\ref{alg:cost_dependent}}}

\psfrag{y}[bc][bc]{\scalebox{0.7}{MAE}}
\psfrag{x}[tc][cb]{\scalebox{0.7}{Number of dimensions $d$}}

\psfrag{100}[cc][cc]{\scalebox{0.6}{$100$}}

\psfrag{250}[cc][cc]{\scalebox{0.6}{$250$}}
\psfrag{500}[cc][cc]{\scalebox{0.6}{$500$}}

\psfrag{1000}[cc][cc]{\scalebox{0.6}{$1000$}}
\psfrag{50000}[cc][cc]{\scalebox{0.6}{$50000$}}

\psfrag{40000}[cc][cc]{\scalebox{0.6}{$ $}}

  \includegraphics[width=0.47\linewidth]{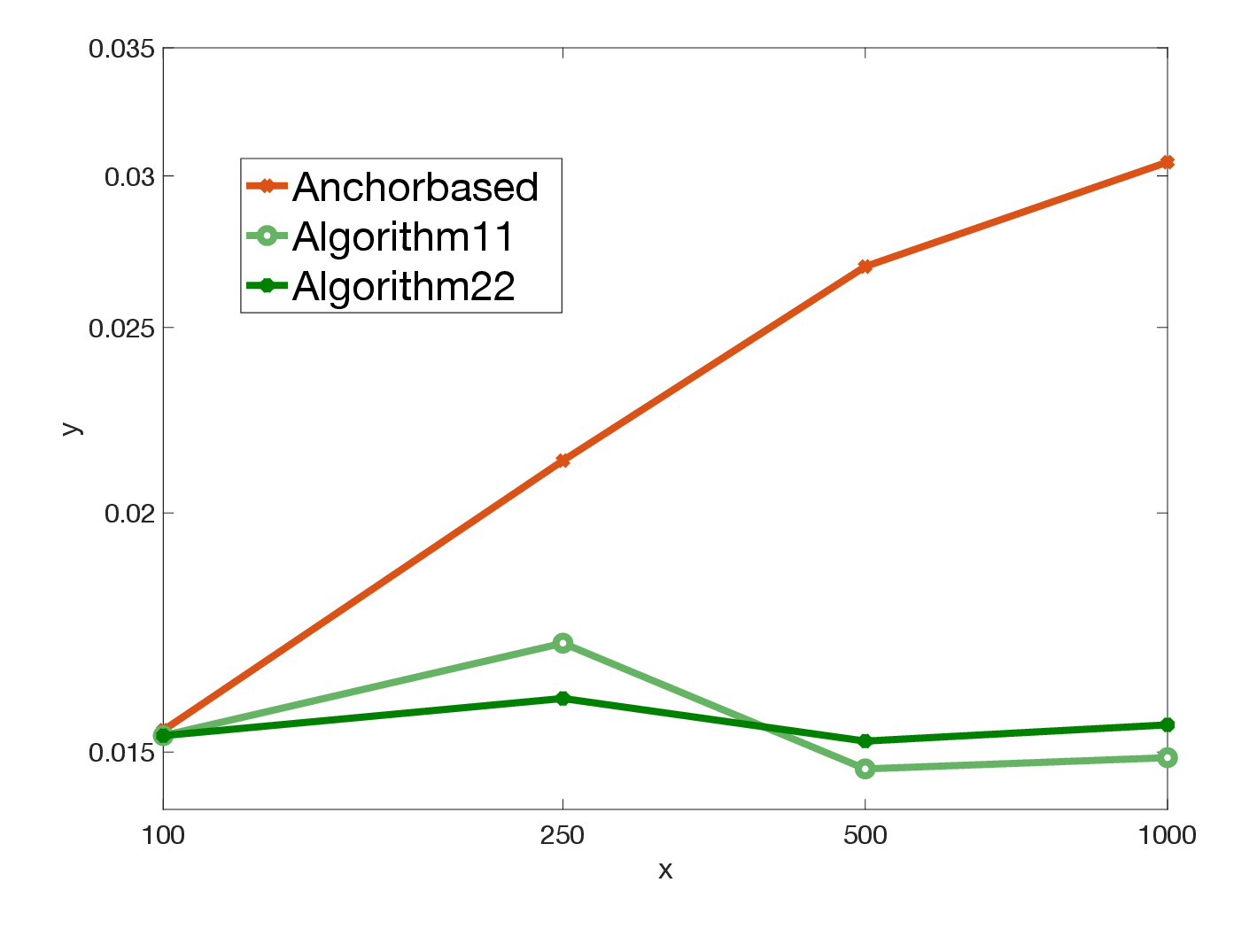}

\caption{The presented methods remain insensitive to the dimension, whereas the error of the anchor-based method increases with the dimension.}
\label{fig:dimension}

\end{figure*}

\paragraph{Running times}

The computational cost of the presented algorithms increases with the number of classes because the number of learned binary classifiers grows linearly with $|\set{Y}|$.
Nevertheless, this increase is not a major concern because each classifier is trained independently for each class, allowing the learning tasks to be run in parallel.

In \cref{fig:time}, we report the median wall-clock training time (in seconds) over 10 executions of each algorithm on the Letter dataset (26 classes) using 26 CPUs.

\cref{alg:threshold}'s runtime is comparable to the anchor-based method's, consistent with each class's binary problem being solved in parallel using one CPU per class. 
\cref{alg:cost_dependent} requires an additional grid search over $\floor{1/\epsilon}=20$ cost values per class ($\epsilon=0.05$), i.e., 520 binary problems in total. 
With class-level parallelization, Algorithm 2's measured runtime is at most 43 times that of Algorithm 1 across the different sample sizes, somewhat above the naive estimate of $20\times$ from grid size alone, reflecting implementation overheads. 
Warm-starting across adjacent grid values could further reduce this gap.

\begin{figure*}[htp]

\centering

\psfrag{250}[cc][cc]{\scalebox{0.6}{$250$\;}}
\psfrag{100}[cc][cc]{\scalebox{0.6}{$100$\;}}

\psfrag{50}[cc][cc]{\scalebox{0.6}{$50$\;}}

\psfrag{25}[cc][cc]{\scalebox{0.6}{$25$\;}}

\psfrag{10}[cc][cc]{\scalebox{0.6}{$10$\;}}
\psfrag{5}[cc][cc]{\scalebox{0.6}{$5$\;}}
\psfrag{2}[cc][tc]{\scalebox{0.6}{$2$\; }}

\psfrag{Anchorbased}[cc][cc]{\scalebox{0.65}{Anchor-based}}

\psfrag{Algorithm11}[cc][cc]{\scalebox{0.65}{Algorithm~\ref{alg:threshold}}}

\psfrag{Algorithm22}[cc][cc]{\scalebox{0.65}{Algorithm~\ref{alg:cost_dependent}}}

\psfrag{y}[bc][bc]{\scalebox{0.7}{Running time [seconds]}}
\psfrag{x}[tc][cb]{\scalebox{0.7}{Number of instances $n$}}

\psfrag{5000}[cc][cc]{\scalebox{0.6}{$5000$}}

\psfrag{7500}[cc][cc]{\scalebox{0.6}{$7500$}}
\psfrag{10000}[cc][cc]{\scalebox{0.6}{$10000$}}

\psfrag{12500}[cc][cc]{\scalebox{0.6}{$12500$}}
\psfrag{15000}[cc][cc]{\scalebox{0.6}{$15000$}}
\psfrag{17500}[cc][cc]{\scalebox{0.6}{$17500$}}

\psfrag{40000}[cc][cc]{\scalebox{0.6}{$ $}}

  \includegraphics[width=0.47\linewidth]{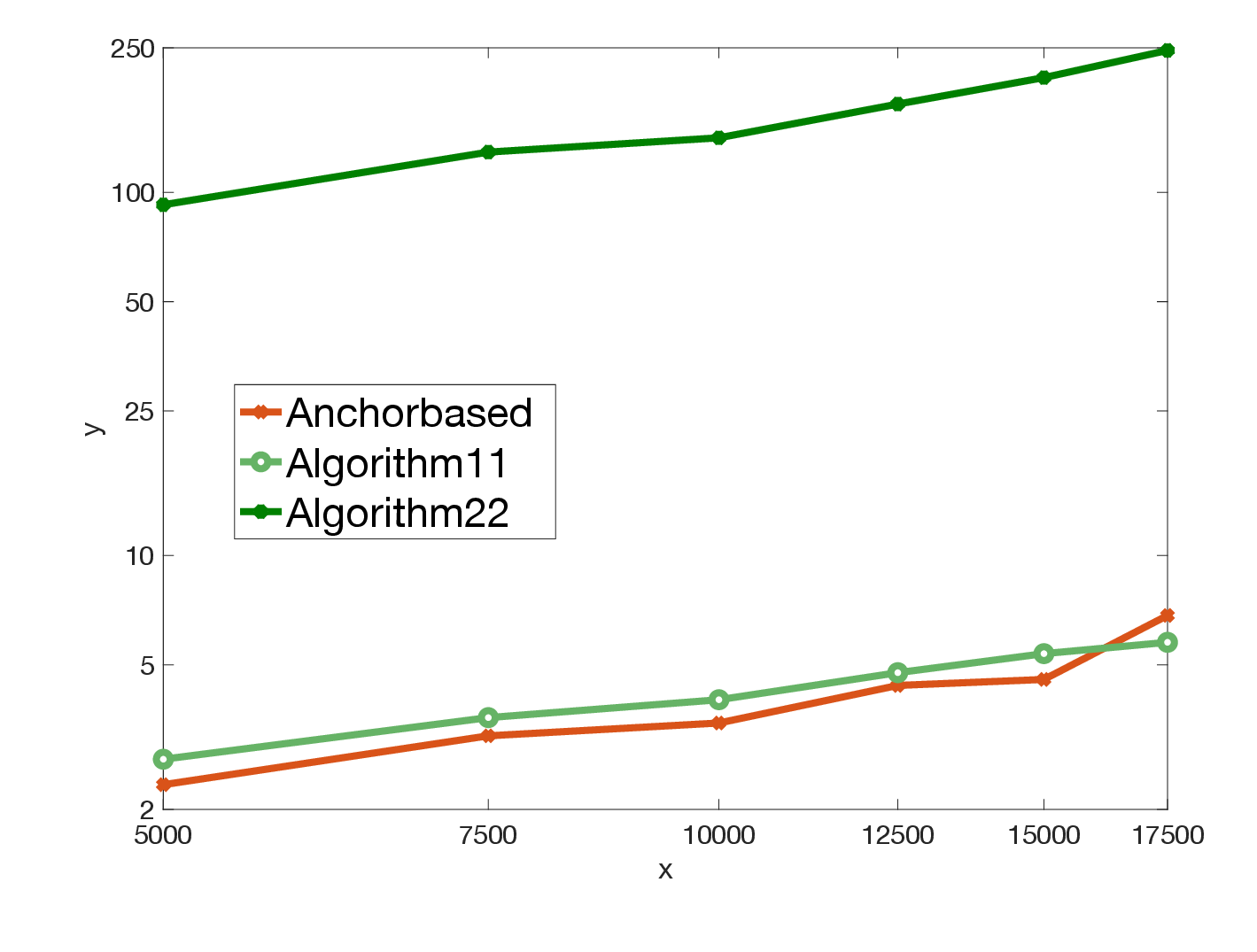}

\caption{The proposed algorithms have low computational overhead in practice as an execution requires at most a few minutes to run.}
\label{fig:time}
\end{figure*}